\definecolor{darkerblue}{HTML}{004EB0}
\theoremstyle{plain}
\newtheorem{theorem}{Theorem}[section]
\newtheorem{proposition}[theorem]{Proposition}
\newtheorem{lemma}[theorem]{Lemma}
\theoremstyle{definition}
\theoremstyle{remark}
\newcommand{\x}{\mathbf{x}}
\newcommand{\y}{\mathbf{y}}
\newcommand{\f}{\mathbf{f}}
\newcommand{\h}{\mathbf{h}}
\newcommand{\s}{\mathbf{s}}
\renewcommand{\u}{\mathbf{u}}
\renewcommand{\v}{\mathbf{v}}
\newcommand{\w}{\mathbf{w}}
\newcommand{\wtilde}{\widetilde{\w}}
\newcommand{\vtilde}{\tilde{\v}}
\newcommand{\xtilde}{\tilde{\x}}
\newcommand{\pitilde}{\tilde{\pi}}
\newcommand{\ptilde}{\tilde{p}}
\newcommand{\mub}{\boldsymbol{\mu}}
\renewcommand{\a}{\mathbf{a}}
\renewcommand{\b}{\mathbf{b}}
\newcommand{\htheta}{\h_\vartheta}
\newcommand{\hxt}{\htheta(\x_t, t)}
\newcommand{\sxt}{\mathbf{s}_{\theta}(\x_t, t)}
\newcommand{\scoret}{\nabla_{\x_t} \log p(\x_t)}
\newcommand{\pmtfunc}[1]{\hat{\x}_t(#1)}
\newcommand{\pmt}{\pmtfunc{\x_t}}
\newcommand{\hthetaphi}{\h_{\vartheta, \varphi}}
\newcommand{\hvxt}{\hthetaphi(\x_t, t)}
\newcommand{\R}{\mathbb{R}}
\newcommand{\diff}{\mathrm{d}}
\newcommand{\dt}{\, \diff t}
\newcommand{\ds}{\, \diff s}
\newcommand{\du}{\, \diff u}
\newcommand{\dx}{\, \diff \x}
\newcommand{\dw}{\, \diff \w}
\newcommand{\dwtilde}{\, \diff \wtilde}
\newcommand{\guidingStrength}{\eta}
\newcommand{\KL}{D_{\mathrm{KL}}}
\newcommand{\Ppath}{\mathbb{P}}
\newcommand{\E}{\mathbb{E}}
\newcommand{\affnum}[1]{\hspace{1pt}{\normalsize\rm\textsuperscript{#1}}}
\newcommand{\affiliation}[2]{\normalsize\rm\textsuperscript{#1}#2}
\title{Minimum-Excess-Work Guidance}
\author{%
\textbf{Christopher Kolloff}\thanks{Joint first authors.} \affnum{ 1 2} \qquad 
\textbf{Tobias Höppe}\footnotemark[1] \affnum{ 3 4} \qquad 
\textbf{Emmanouil Angelis}\footnotemark[1] \affnum{ 3 4} \\[2pt]
\textbf{Mathias Jacob Schreiner}\affnum{1} \qquad
\textbf{Stefan Bauer}\affnum{3 4} \\[2pt]
\textbf{Andrea Dittadi}\thanks{Joint last authors.} \affnum{ 3 4 5} \qquad
\textbf{Simon Olsson}\footnotemark[2]\hspace{4pt}\thanks{Corresponding author: simonols@chalmers.se}\affnum{ \ 1}
\\[7pt]
\affiliation{1}{Chalmers University of Technology and University of Gothenburg, \\ Dept. Computer Science and Engineering, Gothenburg, Sweden}\\
\affiliation{2}{Massachusetts Institute of Technology, Research Laboratory of Electronics, Cambridge, MA, USA} \\
\affiliation{3}{Technical University of Munich}  \qquad
\affiliation{4}{Helmholtz AI, Munich}\\
\affiliation{5}{Max Planck Institute for Intelligent Systems, Tübingen}
}
\begin{document}

\maketitle

\begin{abstract}
We propose a regularization framework inspired by thermodynamic work for guiding pre-trained probability flow generative models (e.g., continuous normalizing flows or diffusion models) by minimizing excess work, a concept rooted in statistical mechanics and with strong conceptual connections to optimal transport. Our approach enables efficient guidance in sparse-data regimes common to scientific applications, where only limited target samples or partial density constraints are available. We introduce two strategies: Path Guidance for sampling rare transition states by concentrating probability mass on user-defined subsets, and Observable Guidance for aligning generated distributions with experimental observables while preserving entropy. We demonstrate the framework’s versatility on a coarse-grained protein model, guiding it to sample transition configurations between folded/unfolded states and correct systematic biases using experimental data. The method bridges thermodynamic principles with modern generative architectures, offering a principled, efficient, and physics-inspired alternative to standard fine-tuning in data-scarce domains. Empirical results highlight improved sample efficiency and bias reduction, underscoring its applicability to molecular simulations and beyond.
\end{abstract}
\vspace{-0.2cm}

\begin{figure}[hb]
    \centering
    \includegraphics[width=0.9\linewidth]{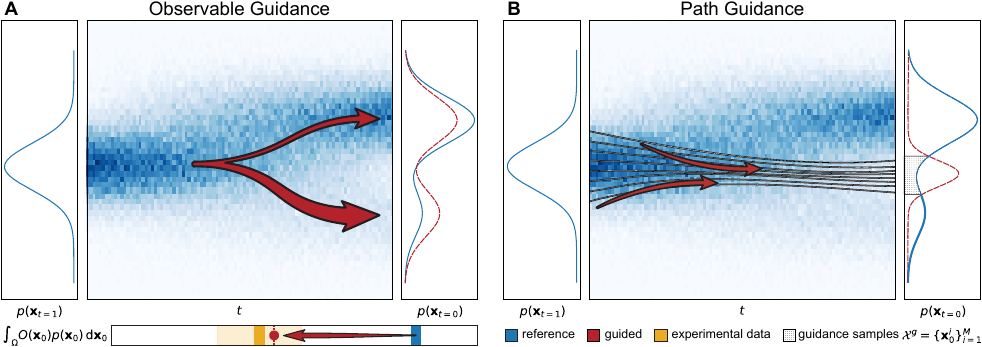}
    \caption{Schematic comparison of observable and path guidance. Both panels show the evolution of probability density over time $t$ (blue heat map) with marginal distributions $p(\x_1)$ and $p(\x_0)$ on the sides (blue: reference model, red dashed: guided model). (A) Observable guidance perturbs the score function (red arrows) to match experimental observables (yellow) with unknown data distribution $p(\x_0)$ using minimal excess work. (B) Path guidance steers sampling trajectories (black solid) toward specific regions defined by guidance samples $\mathcal{X}^g = \{\x_0^i\}_{i=1}^M$ (dotted grey).}
    \label{fig:schematic}
\end{figure}

\section{Introduction}
\label{introduction}
Probability flow generative models, such as normalizing flows \citep{Papamakarios2021, liu2022rectified, albergo2023stochastic,lipman2023flow}  and diffusion models \citep{song2021scorebasedgenerativemodelingstochastic, ho2020denoisingdiffusionprobabilisticmodels}, enable the modeling of complex, high-dimensional data distributions across a wide range of applications. 
These models generate samples through the integration of differential equations evolving a tractable distribution into an approximation of the data distribution.
Although these models excel at general distribution learning, many scientific applications require precise control over generated samples to meet sparse observational constraints (e.g., limited transition state configurations or partial density constraints from experiments). Current guidance methods struggle in data-scarce regimes as they typically rely on either specialized training or abundant reward signals.

Existing approaches often involve fine-tuning \citep{wallace2024diffusion, black2023training, domingo2024adjoint}, incorporating conditional information during training \citep{ho2022classifier, nichol2021glide}, or training an additional noise-aware discriminative model~\citep{song2021scorebasedgenerativemodelingstochastic}. 
While effective, these methods may be impractical in sparse-data settings.
A prominent class of alternative approaches, known as \emph{loss guidance}, perturbs the score of a frozen foundation model with guidance signals obtained via backpropagation through the model \citep{bansal2023universal, chung2023diffusion, song2023loss, kawar2022denoising} or through the full ODE \citep{ben2024d}. Although successfully used for post hoc control, they can suffer from high gradient variance in high-dimensional, structured spaces, motivating the need for a more stable and principled regularization framework.\looseness=-1
 
Inspired by statistical mechanics, we introduce an approach for regularizing guidance of probability flow generative models based on the principle of minimum excess work (MEW). In this context, ``work'' is a measure of the physical effort, e.g., energy, needed to transform a system from one macrostate to another, where a macrostate is characterized by a probability density function. MEW thereby acts as a natural, physics-inspired regularization scheme for guiding generative models. We develop the theoretical framework for MEW-based regularization of generative models, explicitly connecting it to optimal transport theory, and validate its effectiveness through extensive benchmarks across multiple systems, including a coarse-grained protein Boltzmann Emulator. In addition to introducing the MEW framework, we propose a simple yet effective form of path guidance tailored to sparse sampling problems. We specialize MEW guidance to address two problems frequently encountered in molecular dynamics simulations. First, \textbf{Observable Guidance}: a bias correction method to align distributions with experimental observables while preserving distributional entropy. Second, \textbf{Transition State Sampling}: a path-guidance-based sampling strategy to focus sampling on a user-defined subset, e.g., the low-probability transition region between states.\footnote{All data and code will be made publicly available upon publication.}

\section{Background and Preliminaries}
\label{background}

\textbf{Diffusion models} learn a stochastic process that maps a simple prior distribution $p_1$ to an approximation $p_0$ of the data distribution $q_0$. 
This is typically done by reversing a known noising process governed by an Ornstein--Uhlenbeck SDE, $\dx_t = \f(\x_t, t) \dt + g(t) \dw_t$ with $\f(\x_t, t)$ linear in $\x_t$.
This process induces a family of marginals $q_t$ with simple forward transitions $q_t(\x_t \vert \x_0) = \mathcal{N}(\x_t; \alpha_t \x_0, \sigma_t^2 \mathbf{I})$, where $\alpha_t,\sigma_t$ are determined by the SDE coefficients. 
Given access to $q_1$ and the score $\nabla_{\x_t} \log q_t(\x_t)$, one can sample from $q_0$ via the time-reversal \citep{anderson1982reverse}:
\begin{equation}
\dx_t = [\f(\x_t, t) - g(t)^2 \nabla_{\x_t} \log q_t(\x_t)] \dt + g(t) \dwtilde_t \ , \qquad \x_1 \sim q_1 \ ,
\label{eq:background:reverse_diffusion}
\end{equation}
where $\wtilde_t$ is a reverse-time Wiener process, or via the \emph{probability flow ODE} \citep{Maoutsa2020,song2021scorebasedgenerativemodelingstochastic}:
\begin{equation}
  \frac{\dx_t}{\dt} = \f(\x_t, t) -\frac{1}{2} g(t)^2 \nabla_{\x_t} \log q_t(\x_t)  \ , \qquad \x_1 \sim q_1 \ ,
  \label{eq:background:pf_ode}
\end{equation}
both having the same time-marginals $q_t$ as the forward process.
In practice, the score is approximated by a score model $\sxt$, and a simple distribution $p_1 \approx q_1$ is used as initial distribution at $t=1$:
\begin{alignat}{3}
    \dx_t &= [\f(\x_t, t) - g(t)^2 \sxt] \dt + g(t) \dwtilde_t \qquad &\x_1 &\sim p_1 \label{eq:background:reverse_diffusion_approximate} \\
    \frac{\dx_t}{\dt} &= \f(\x_t, t) -\frac{1}{2} g(t)^2 \sxt  \quad &\x_1 &\sim p_1 \label{eq:background:pf_ode_approximate}
\end{alignat}
We will denote by $\{p_t\}_{t\in[0,1]}$ the probability path induced by \cref{eq:background:reverse_diffusion_approximate} or \cref{eq:background:pf_ode_approximate}.

\textbf{Equilibrium sampling of the Boltzmann distribution.}
A key challenge in statistical mechanics is to generate independent samples from the Boltzmann distribution
\begin{equation}
    \x \sim p(\x) \propto \exp(-\beta U(\x)),
\label{eq:background:boltzmann-distribution}
\end{equation}
where $\beta = (k_\textrm{B}T)^{-1}$ is the inverse temperature and $U(\x)$ is the potential energy of a configuration $\x \in \Omega \subseteq \R^d$. This distribution underlies estimation of macroscopic observables, such as $\mathbb{E}_{p(\x)}[O_i(\x)]$, which allow for a direct comparison to experimental data.
However, sampling from $p(\x)$ is notoriously difficult due to the rugged energy landscape $U(\x)$. Traditional methods such as Molecular Dynamics (MD) or Markov Chain Monte Carlo (MCMC) suffer from slow mixing and generate highly correlated samples that often fail to cross energy barriers between metastable states. This leads to biased estimates and poor coverage of transition configurations, i.e., regions in state space that are severely undersampled but mechanistically crucial.
Recent work on Boltzmann Generators \citep{Noe2019, Klein2023a, midgley2023se,pmlr-v119-kohler20a, Moqvist2025, 2502.18462} addresses these challenges by learning direct mappings from simple priors to Boltzmann-like distributions. Yet, two issues remain: inaccuracies in the potential energy model can bias the learned distribution \citep{Kolloff2022, Klein2023b}, and physically important but low-probability states (e.g., transition states) remain exponentially rare.
In this work, we address both problems by guiding a generative model using sparse experimental or structural information, leveraging a coarse-grained Boltzmann emulator inspired by \citet{Arts2023}.

\textbf{Maximum Entropy Reweighting} is a broadly adopted technique to overcome force field inaccuracies in potential energy models \citep{Pitera2012, Cavalli2013,Olsson2013,Olsson2016, Boomsma2014, White2014, Beauchamp2014, Hummer2015, Bonomi2016}.
The result of this optimization is a tilted distribution which depends on a set of Lagrange multipliers, $\{\lambda_i\}$, each corresponding to an experimental observable of interest. The solution $p^{\prime}(\x) \propto p(\x) \exp\!\big(\!-\sum_{i=1}^M \lambda_i O_i(\x)\big)$ minimizes the KL divergence from the reference distribution $p(\x)$, subject to the constraints $\mathbb{E}_{p^{\prime} (\x)}[O_i(\x)] = o_i$. A detailed derivation is provided in \cref{sec:maxent-reweighting-derivation} for the reader's convenience.
However, until now, this approach has been limited to reweighting fixed sets of samples $\mathcal{X} = \{ \x_i\}_{i=1}^M$ (e.g., an MD trajectory), thus motivating methods to apply these principles in a generative setting.

\textbf{Loss Guidance} is the process of adjusting the diffusion process to satisfy a target condition $\y$ without fine-tuning and has been explored in several prior works \citep{bansal2023universal, chung2023diffusion, song2023loss}. To sample from the conditional distribution $p(\x_0 \vert \y)$ {\it post hoc}, we can use the following identity: $\nabla_{\x_t} \log p(\x_t \vert \y) = \scoret + \nabla_{\x_t} \log p(\y \vert \x_t)$.
Obtaining $\nabla_{\x_t} \log p(\y \vert \x_t)$ typically requires training a separate model on the noisy states $\x_t$, as done in classifier guidance \citep{song2021scorebasedgenerativemodelingstochastic}. Alternatively, the posterior mean $\pmt \coloneqq \mathbb{E}_{p(\x_0 \mid \x_t)}\left[\x_0\right]$ can be used as an estimate of the clean data $\x_0$. Using Tweedie’s formula, the posterior mean can be expressed as $\mathbb{E}_{p(\x_0 \mid \x_t)}\left[\x_0\right] = \frac{1}{\alpha_t}(\x_t + \sigma_t^2 \nabla_{\x_t} \log p(\x_t))$. This allows us to approximate the likelihood in data space via $\log p(\y \vert \pmt) \simeq \ell(\y, \pmt)$, where $\ell$ denotes a suitable differentiable loss function (e.g., cross-entropy or log-likelihood under a differentiable model). The gradient $\nabla_{\x_t} \log p(\y \vert \pmt)$ can then be computed by backpropagation.
In practice, the mean is approximated using the score model $\sxt$, allowing the score estimate to be updated as $\nabla_{\x_t} \log p(\x_t \vert \y) \simeq \sxt + \guidingStrength_t \nabla_{\x_t} \ell (\pmt, \y)$
with $\guidingStrength_t$ being a guiding strength function.

\textbf{Work and Optimal Transport.}
In statistical mechanics, thermodynamic work $W$ is the energy required to transform a system from a probabilistic state $p$ to another $p'$. For a continuum system:

\begin{equation}
    W = \iint \mathbf{J}(\x,t) \cdot \mathbf{F}(\x,t) \dx \dt, \label{eq:workcontinuum}
\end{equation}
where $\mathbf{J}(\x,t) = \v(\x,t) p_t(\x)$ is the probability flux and $\mathbf{F}(\x,t)$ is the force applied to the system. This generalizes the classical work expression $W = \int F(\x) \dx$. When the force and velocity field coincide (i.e., the Jacobian of the push-forward map associated with the velocity field is a diffeomorphism), they can be expressed as spatial gradients of a potential $u(\x,t)$ \citep{Brenier1991}. Under these conditions, $W$ becomes equivalent to the kinetic energy in the Benamou--Brenier formulation of optimal transport \citep{Benamou2000}, and provides an upper bound on the squared 2-Wasserstein distance between the distributions:
\begin{align}
    W^2_2(p, p') \leq \iint \lVert \v(\x,t) \lVert^2 p_t(\x)\dx\dt = W
    \label{eq:BB}
\end{align}
where $\v$ and $p$ satisfy $\frac{\partial}{\partial t} p_t(\x) = -\nabla_\x \cdot [p_t(\x)\, \v(\x, t)]$. Minimizing $W$ yields the optimal transport map that transforms $p$ into $p'$ along the path requiring minimal energy. The idea of identifying probability paths minimizing the kinetic energy, or more generally a Lagrangian, has recently been applied to improve the efficiency of probability flow generative models \citep{2002.04461,Tong2023,klein2023equivariant,irwin2025semlaflow,pmlr-v202-shaul23a,albergo2023stochastic,pmlr-v202-neklyudov23a,2310.10649}.

\section{Minimum-Excess-Work Guidance}
\label{sec:mew-guidance}
During the generative process, we transform a simple distribution $p_1 \sim \mathcal{N}(\boldsymbol{0}, \mathbf{I})$ into a complex data distribution $p_0$ with support $\Omega \subseteq \R^d$ by solving the reverse-time SDE~\eqref{eq:background:reverse_diffusion_approximate} or the ODE~\eqref{eq:background:pf_ode_approximate}. 
To incorporate additional constraints and align the generative process with new information, we modify the drift of \cref{eq:background:reverse_diffusion_approximate,eq:background:pf_ode_approximate} by introducing an additive perturbation to the score model:
\begin{alignat}{3}
    \dx_t &= \left(\f(\x_t, t) - g(t)^2 \left[\sxt + \hxt\right]\right) \dt + g(t) \dwtilde_t \qquad &\x_1 &\sim p_1  \ ,\label{eq:theory:augmented-reverse-SDE} \\
    \frac{\dx_t}{\dt} &= \f(\x_t, t) -\frac{1}{2} g(t)^2 \left[\sxt + \hxt\right]  \quad &\x_1 &\sim p_1  \ ,\label{eq:theory:augmented-ODE}
\end{alignat}
where $\htheta : \R^d \times [0,1] \to \R^d$ is a time-dependent vector field.

{\bf The aim of MEW guidance} is to satisfy a guidance objective for the guided distribution $p'_0\neq p_0$, while minimizing the \emph{excess work} associated with $\hxt$, required to modify the probability density, $p_0$. 

We define the excess work in the context of an unperturbed and perturbed system described by the following ODEs over $t\in[0,1]$:
\begin{align}
    \frac{\dx_t}{\dt} = \v(\x_t,t)    \ , \qquad
    \frac{\dx_t}{\dt} = \v(\x_t,t) + \u(\x_t, t) \ ,
\end{align}
with the respective time-marginal densities $p_t, p'_t$, where we assume $p_1 = p'_1$.

Loosely following \cref{eq:BB}, we define the excess work as
$\Delta W \coloneqq \iint \|\u(\x, t)\|^2\,p'_t(\x) \dx\dt$\,,
which for the ODEs~\eqref{eq:background:pf_ode_approximate} and~\eqref{eq:theory:augmented-ODE} becomes:
\begin{align}
    \Delta W (\vartheta) = \iint \frac{g(t)^4}{4} \|\htheta(\x, t)\|^2\,p'_t(\x) \dx\dt \ .
    \label{eq: def excess work PF ODE}
\end{align}

We will now establish quantitative relationships between the perturbation of the model and the changes it induces in the generated distribution. Detailed proofs are provided in \cref{sec: wasserstein proof,sec: kl proof}.\looseness=-1

\begin{restatable}{proposition}{wassersteinBound}
Let $p_t$ and $p'_t$ be the distributions at time $t$ obtained by solving the ODEs~\eqref{eq:background:pf_ode_approximate} and~\eqref{eq:theory:augmented-ODE} backwards in time from the same initial distribution $p_1$ at $t=1$. Assume that the vector fields are measurable in time and $L_t$-Lipschitz in space with $L_t$ integrable. Then:
\begin{align}
    W_2^2(p_0, p'_0) \leq \int_0^1 w_{\mathrm{W}}(t) \ \frac{g(t)^4}{4} \ \E_{\x \sim p'_t} \left[ \|\htheta(\x, t)\|^2 \right] \dt \ ,\qquad w_{\mathrm{W}}(t) \coloneqq e^{t + 2 \int_0^t L_s \ds} \ .
\end{align}
\end{restatable}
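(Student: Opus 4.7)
The strategy is a synchronous-coupling argument combined with a pathwise Grönwall estimate. First, I would initialize both ODEs at a common point $\x_1 \sim p_1$ and evolve them backward in time, producing trajectories $\x_t$ solving the unperturbed ODE~\eqref{eq:background:pf_ode_approximate} and $\x'_t$ solving the augmented ODE~\eqref{eq:theory:augmented-ODE}. Since $\x_0 \sim p_0$ and $\x'_0 \sim p'_0$, the pair $(\x_0, \x'_0)$ is a valid coupling of the two marginals, so $W_2^2(p_0, p'_0) \leq \E\|\x_0 - \x'_0\|^2$, and the problem reduces to bounding this expected squared distance.

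Second, I would control the pathwise quantity $e(t) \coloneqq \|\x_t - \x'_t\|^2$. Write the common drift as $\v(\x,t) \coloneqq \f(\x,t) - \tfrac{1}{2}g(t)^2 \sxt$ and the perturbation as $\u(\x,t) \coloneqq -\tfrac{1}{2}g(t)^2\hxt$, and reverse time via $\tau = 1-t$ so that the backward flow becomes a forward IVP with $e(\tau{=}0)=0$. Differentiating $e$ in $\tau$, the cross term splits into a contribution from $\v(\x_t,t) - \v(\x'_t,t)$ (handled by Cauchy--Schwarz and the $L_{1-\tau}$-Lipschitz hypothesis) and one from $\u(\x'_t, t)$ (handled by Young's inequality $2 a \cdot b \leq a^2 + b^2$), giving a linear differential inequality
\begin{align*}
\frac{\diff}{\diff \tau}\, e(\tau) \leq \bigl(2 L_{1-\tau} + 1\bigr)\,e(\tau) + \|\u(\x'_{1-\tau}, 1-\tau)\|^2 .
\end{align*}
Applying Grönwall's lemma with $e(0)=0$ and changing variables back via $s \mapsto t = 1-s$ (under which $\int_{1-t}^{1}(2L_{1-r}+1)\,dr = t + 2\int_0^t L_s\ds$) yields the pathwise bound
\begin{align*}
\|\x_0 - \x'_0\|^2 \leq \int_0^1 e^{t + 2\int_0^t L_s\ds}\,\frac{g(t)^4}{4}\,\|\htheta(\x'_t, t)\|^2 \dt .
\end{align*}

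Finally, I would take expectation over the shared initialization $\x_1 \sim p_1$, exchange expectation and the outer time integral by Fubini, and observe that $\x'_t \sim p'_t$ under this coupling; this delivers the claimed bound with $w_{\mathrm{W}}(t) = e^{t + 2\int_0^t L_s\ds}$.

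The main obstacle is mostly technical rather than conceptual: one needs to verify that measurability in time, $L_t$-Lipschitz-in-space, and $L_t \in L^1([0,1])$ are sufficient for both ODE flows to be well-defined on $[0,1]$ and for $\tau \mapsto e(\tau)$ to be absolutely continuous, so that Grönwall applies as stated. The $+1$ appearing in $2L_{1-\tau}+1$ reflects the particular Young coefficient chosen; other choices trade the prefactor multiplying $\|\htheta\|^2$ against the exponent without changing the qualitative form of the bound.
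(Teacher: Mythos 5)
Your proof is correct and follows essentially the same strategy as the paper's: a synchronous coupling, a Cauchy--Schwarz/Lipschitz plus Young's-inequality split of the cross term, and a Grönwall estimate, with the time-reversal substitution handled directly rather than via a separate forward-in-time proposition. The only minor technical variation is that you apply Grönwall pathwise before taking expectations, whereas the paper first takes expectations (commuting $\frac{\diff}{\dt}$ and $\E$) and then applies Grönwall to the deterministic function $f(t)=\E\|\x_t-\x'_t\|^2$; your ordering neatly sidesteps that interchange.
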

\begin{restatable}{proposition}{klBound}
Let $p_t$ and $p'_t$ be the distributions at time $t$ induced by the reverse-time SDEs~\eqref{eq:background:reverse_diffusion_approximate} and~\eqref{eq:theory:augmented-reverse-SDE} starting from the same distribution $p_1$ at $t=1$. Assume that both SDEs admit strong solutions, and that $\Ppath' \ll \Ppath$, where $\Ppath,\Ppath'$ are the path measures induced by the SDEs on $C([0,1], \R^d)$. Then:
\begin{align}
    \KL(p'_0 \| p_0) \leq \int_0^1 w_{\mathrm{KL}}(t) \ \frac{g(t)^4}{4} \ \E_{\x \sim p'_t} \left[ \|\htheta(\x, t) \|^2 \right] \dt \ , \qquad w_{\mathrm{KL}}(t) \coloneqq \frac{2}{g(t)^2} \ .
\end{align}
\end{restatable}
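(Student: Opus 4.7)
The plan is to bound $\KL(p'_0 \| p_0)$ from above by the KL divergence of the full path measures $\Ppath'$ and $\Ppath$ on $C([0,1], \R^d)$, and then evaluate the latter via Girsanov's theorem. This is the natural continuous-time analogue of the chain-rule/data-processing bound used in the derivation of score-matching ELBOs, and it fits the hypotheses of the proposition exactly, since the two SDEs share the diffusion coefficient $g(t)$, share the initial condition $p_1$ at $t=1$, and differ only by an additive perturbation in the drift.

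First, because $p'_0$ and $p_0$ are the time-$0$ marginals of $\Ppath'$ and $\Ppath$ respectively (both viewed as laws on $C([0,1], \R^d)$), the data processing inequality applied to the evaluation map $\omega \mapsto \omega(0)$ yields
\begin{align}
\KL(p'_0 \| p_0) \leq \KL(\Ppath' \| \Ppath).
\end{align}
This step is oblivious to the direction of time: it only uses measurability of the projection, not the dynamics.

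Second, I would apply Girsanov's theorem to the pair of reverse-time SDEs~\eqref{eq:background:reverse_diffusion_approximate} and~\eqref{eq:theory:augmented-reverse-SDE}. Their drifts differ by exactly $-g(t)^2\,\hxt$, while the diffusion coefficient $g(t)$ is common. Under the assumed $\Ppath' \ll \Ppath$, the Radon--Nikodym derivative is well-defined and the standard Girsanov identity gives
\begin{align}
\KL(\Ppath' \| \Ppath) = \tfrac{1}{2}\,\E_{\Ppath'}\!\int_0^1 \bigl\lVert g(t)^{-1}\bigl(-g(t)^2\,\hxt\bigr)\bigr\rVert^2 \dt = \tfrac{1}{2}\int_0^1 g(t)^2\,\E_{\x \sim p'_t}\bigl[\lVert\htheta(\x,t)\rVert^2\bigr]\dt,
\end{align}
where the last equality uses Fubini--Tonelli together with the fact that $\x_t$ under $\Ppath'$ is distributed as $p'_t$. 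Writing $\tfrac{g(t)^2}{2} = \tfrac{2}{g(t)^2}\cdot\tfrac{g(t)^4}{4} = w_{\mathrm{KL}}(t)\cdot\tfrac{g(t)^4}{4}$ produces the claimed bound.

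The main obstacle is the careful invocation of Girsanov in the reverse-time setting. A clean way is to substitute $s = 1 - t$ and rewrite the reverse-time SDEs as forward-time SDEs in $s$ driven by an ordinary Brownian motion, then invoke Girsanov in its standard form; the expression above is invariant under this substitution. The hypothesis $\Ppath' \ll \Ppath$ lets us avoid a separate Novikov-type verification, but one should still justify the absence of singular local-martingale pieces in the Radon--Nikodym density and the applicability of Fubini--Tonelli under the stated measurability and integrability assumptions on $\htheta$.
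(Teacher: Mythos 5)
Your proof is correct and follows essentially the same route as the paper: the data processing inequality (equivalently, the chain rule for relative entropy applied at $t=0$) bounds the marginal KL by the path-space KL, which is then evaluated via Girsanov and converted to a time-marginal integral by Fubini--Tonelli. The one technical point where the paper is more careful is the Girsanov step: it also applies the chain rule at $t=1$, using $p'_1 = p_1$ to drop $\KL(p'_1 \| p_1)$ and reduce to KL divergences between path measures conditioned on a common $\x_1 = \x_1^*$, where Girsanov applies in its textbook form, then integrates over $p'_1$; you apply Girsanov directly to the unconditional path measures and flag but do not fully work out the initial-condition/reverse-time justification, which is valid precisely because the shared terminal marginal makes the unconditional Radon--Nikodym density coincide with the conditional one.
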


Since both bounds---for the KL divergence and the Wasserstein distance---are time-reweighted versions of the excess work $\Delta W$ \eqref{eq: def excess work PF ODE}, it serves as a natural choice of regularizer for guidance objectives, encouraging the perturbed distribution $p'_0$ to remain close to the reference base distribution $p_0$.

We then optimize the parameters $\vartheta$ of the perturbation $\htheta$ by minimizing the following objective:
\begin{align}
    \mathcal{L}(\vartheta) = \mathcal{L}_1(\vartheta) + \gamma \ \Delta W(\vartheta) \ ,
    \label{eq: objective function}
\end{align}
where $\mathcal{L}_1(\vartheta)$ is a guidance objective, and $\gamma$ controls the regularization strength.

We now explore how this minimum-excess-work principle is applied in the two settings:
(1) guidance based on expectations of observables; (2) targeted guidance towards a user-defined subspace.

\textbf{Observable Guidance.}
In this section, we guide a diffusion model to align with data that reflects an expectation, using the MEW approach. Using a set of Lagrange multipliers $\Lambda = \{\lambda_1, ..., \lambda_M\}$ pre-estimated using, e.g., the algorithm outlined in \citet{Bottaro2020}, we dynamically adjust the score by estimating an augmentation factor $\h_\vartheta$ that ensures $\left| \mathbb{E}_{p'(\x)}[O_i(\x)] - o_i \right| \leq \epsilon$. We express the guidance factor as,
\begin{equation}
    \hxt = - \guidingStrength_t(\vartheta) \sum_{i=1}^M \lambda_i \nabla_{\x_t} O_i(\pmt)\ .
    \label{eq:theory:hxt-obs}
\end{equation}

In the same way that a score model $\sxt$ approximates the gradient of the log probability, $\hxt$ is the gradient of the observable function with respect to the latent variable $\x_t$. $\lambda_i$ steers the flow in directions favored (or penalized) by the experimental observable expectation, thus ``adjusting'' the score of the original model. Its amplitude is modulated by $\guidingStrength_t(\vartheta) = \guidingStrength_\text{init} \exp(-\kappa(1-t))$, and learning the parameters $\vartheta=\{\guidingStrength_\text{init}, \kappa\}$ of this scalar function constitutes the primary objective of our optimization strategy.
Note, we use the mean posterior estimation $\pmt$ discussed in \cref{background}, instead of using $\x_t$ directly \citep{bansal2023universal, chung2023diffusion}.
Our optimization objective is two-fold: We want to reduce the discrepancies between the model predictions and experimental data while minimizing the excess work exerted by the augmentation. 
The former is a supervised loss defined as
\begin{equation}
    \mathcal{L}_{1} (\vartheta) = \frac{1}{M}\sum_{i=1}^M\left(o^{\text{exp}}_i -  \mathbb{E}_{\x \sim p'_0}[O_i(\x)] \right)^2,
\end{equation}
where $o^{\text{exp}}_i$ are the experimental values and $\mathbb{E}_{\x\sim p'_0}[O_i(\x)]$ denotes the expected values under the adjusted distribution $p'_0$.
To balance accuracy with the principle of maximum entropy, we introduce a regularization term based on minimizing the excess work $\Delta W$. Substituting the specific form of $\hxt$ from \cref{eq:theory:hxt-obs} into \cref{eq: def excess work PF ODE}, we obtain:
\begin{align}
    \Delta W (\vartheta) = \int_0^1 \frac{g(t)^4}{4} \lvert \guidingStrength_t(\vartheta) \rvert^2 \ \mathbb{E}_{\x\sim p'_t}\left[\left\lVert \sum_{i=1}^M \lambda_i \nabla_{\x} O_i(\pmtfunc{\x}) \right\rVert^2\right] \dt.
\label{eq:theory:regularizer}
\end{align}

\textbf{Path Guidance.} In this setting, we assume access to a set of guiding samples $\mathcal{X}^g = \{\x^i\}_{i=1}^M$, each belonging to a target subset $A \subset \Omega$ of the sampling space. The goal is to modify the score of the diffusion model to sample from a perturbed distribution $\x \sim p_0^{\prime}$ that increases the likelihood of $\x\in A$. Since $\mathcal{L}_1$ need not be differentiable, the objective can be formulated generally as: 
\begin{equation}
    \mathcal{L}_1(\vartheta, \varphi) = 1 - \frac{1}{N}\mathbb{E}_{\x\sim p'_0}\left[\mathbb{1}_{\{\x \in A\}}\right] \, .
\end{equation}
Guiding the diffusion process towards the subset $A$ can be done by taking advantage of the probability flow ODE \eqref{eq:background:pf_ode_approximate}, which holds the desirable property of providing unique \emph{latent representations} of each data point, for any time step $t$. Starting from the guiding samples, we compute their trajectories by integrating \cref{eq:background:pf_ode_approximate} forward in time, obtaining the latent representations $\mathcal{X}_{t}^g = \{\x_{t}^i\}_{i=1}^M$ for time $t$. The set $\{\mathcal{X}_{t}^g\}_{t=0}^{1}$ defines a trajectory of latent representations that the model must follow to ensure its samples satisfy $\x^{\prime} \in A$. Based on this trajectory, we can define the augmentation factor as:
\begin{equation}
    \label{eq:theory:path_guidance}
    \hvxt \coloneqq \guidingStrength_t(\vartheta) \nabla_{\x_t} \log \mathcal{K}_{h_t(\varphi)}(\x_t, \mathcal{X}_t^g)
\end{equation}
with $\mathcal{K}_{h_t(\varphi)}(\x_t, \mathcal{X}_t^g) \coloneqq \sum_{\x_t^i \in \mathcal{X}_t^g} \mathrm{K}_{h_t(\varphi)}(\x_t, \x_t^i)$ where $\mathrm{K}$ can be any differentiable kernel with time-dependent bandwidth $h_t(\varphi)$. By updating the score function using \cref{eq:theory:path_guidance}, we align the sampling trajectory with that of the guiding points, while regularizing the guidance strength via the same excess work penalty as in \cref{eq:theory:regularizer}, now evaluated using the time-dependent KDE score $\mathbb{E}_{\x\sim p'_t} \left[\lVert \nabla_{\x} \log \mathcal{K}_{h_t(\varphi)}(\x, \mathcal{X}_t^g) \lVert \right]$. In practice, both $\guidingStrength_t(\vartheta)$ and $h_t(\varphi)$ are implemented as sigmoid functions with learnable parameters $\vartheta=(\vartheta_{\textit{init}}, \vartheta_g, \vartheta_s)$ and $\varphi=(\varphi_{\textit{init}}, \varphi_g, \varphi_s)$ (see \cref{sec:appendix:path-guidance}) and optimized for \cref{eq: objective function} using Bayesian optimization with Gaussian Processes. The use of sigmoids allows the guidance to be stronger early in the trajectory, when $\x_t$ is close to the Gaussian prior and the kernel signal is more stable, and weaker near $t=0$, where the data distribution is more complex and direct guidance is less reliable.

\section{Experiments}
We now demonstrate the application of minimum-excess-work guidance across several experimental setups. We first evaluate path and observable guidance on two toy setups and will then proceed to showcase our approach on a coarse-grained protein Boltzmann Emulator.

\subsection{Observable Guidance}\label{sec:experiments:obs_guidance}

\subsubsection{Synthetic Data}
To evaluate our approach, we target a synthetic dataset where the ground truth is available. We trained a diffusion model on samples from a biased 1D quadruple-well potential \citep{Prinz2011}, a simple test system displaying two key properties in molecular dynamics: multi-modality and metastability, while keeping the corresponding Boltzmann distribution is numerically accessible, allowing us to directly gauge our methods' ability to recover the unbiased distribution.

\textbf{Experimental Setup.} The observable function is implemented as a Gaussian mixture model with four components. We compute the Lagrange multiplier following \citep{Bottaro2020} and integrate it into our guidance framework via \cref{eq:theory:augmented-reverse-SDE}. The model is evaluated by comparing the expectation values of the observable function and the KL divergence from the ground truth. The experimental details can be found in \cref{sec:appendix:observable-guidance}.

\begin{wrapfigure}[22]{r}{0.61\textwidth}
    \centering
    \vspace{-6pt}
    \includegraphics[width=\linewidth]{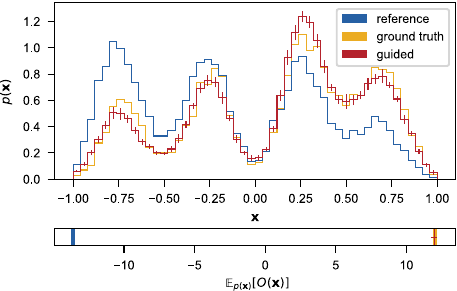}
    \caption{\textbf{Comparison of Probability Distributions Before and After Observable Guidance for a 1D Energy Potential.} The top plot shows the probability distributions for three models: the biased reference model (blue), the ground truth model (yellow), and the guided model (red). Guidance helps to align the reference model with that of the ground truth model using only the expectation of an observable function (bottom) while minimizing excess work.}
    \label{fig:experiments:pp-guidance}
\end{wrapfigure}
\textbf{Evaluation.} \cref{fig:experiments:pp-guidance} shows the sampled probability distributions and observable values for three models: the biased reference model (blue), ground truth (GT) (yellow), and our guided model (red). The guided model successfully recovers the GT using only the observable expectation value as supervision. Quantitatively, \cref{tab:results:toy-observables} shows our method reduces the KL divergence by a factor of 10 from 0.13 to $0.019 \pm 0.002$ and brings the observable expectation from $-13.6$ to $11.95 \pm 0.22$, nearly matching the GT of 12.01.

\textbf{Ablation.}
To evaluate the impact of MEW regularization, we performed an ablation study comparing the observable-guided model trained with ($\gamma > 0$) and without ($\gamma = 0$) MEW. As shown in \cref{tab:results:ablation-observables} and visualized in \cref{fig:app:ablation-obs} (in the Appendix), both models generate similar expected observables. However, the model trained without MEW suffers from mode collapse, concentrating all probability density in a narrow region of state space and exhibiting poor distributional fidelity. In contrast, MEW-regularized training preserved the broader shape of the reference distribution. This results highlights the impact of MEW regularization by stabilizing training and preventing degenerate solutions.

\begin{figure}
\begin{minipage}[t]{0.49\textwidth}
    \centering
    \captionof{table}{Metrics for $O(\x)$ and KL divergence across models.}
    \resizebox{\linewidth}{!}{%
    \begin{tabular}{lcc}
    \toprule
    Model $\mathcal{M}$  & $\mathbb{E}_{p_\mathcal{M}(\x)}[O(\x)]$ & $\text{KL}(p_\text{GT} \| p_\mathcal{M})$ \\ \midrule
    Ground Truth    & 12.01    & ---      \\
    Reference & $-13.6$    & 0.13   \\
    Guided    & $11.95 \pm 0.22$   & $0.019 \pm 0.002$\\ \bottomrule
    \end{tabular}
    \label{tab:results:toy-observables}
    }
\end{minipage}
\hfill
\begin{minipage}[t]{0.49\textwidth}
    \centering
    \captionof{table}{Quantitative metrics evaluating the guidance process: expected observables and KL divergence.}
    \resizebox{\linewidth}{!}{%
    \begin{tabular}{lcc}
    \toprule
    Model $\mathcal{M}$       & \begin{tabular}[c]{@{}c@{}}$\mathbb{E}_{p_{\mathcal{M}}(\x)}[O(\x)]$\\ (kcal/mol)\end{tabular} & $\text{KL}(p_{\text{MD}}' \| p_\mathcal{M})$ \\ \midrule
    Experimental & $-1.87$ & --- \\
    Reference    & $-1.27$ & $0.329$ \\
    Guided       & $-1.82 \pm 0.01$  & $0.005 \pm 0.002$  \\ \bottomrule
    \end{tabular}
    \label{tab:results:chingolin-guidance-metrics}
    }
\end{minipage}
\end{figure}

This synthetic experiment demonstrates empirically that our method enables guiding of a biased model toward the correct distribution using only expectation values, while maintaining similarity to the reference through MEW regularization. 

\subsubsection{Coarse-Grained Protein Boltzmann Emulator (cgBE)}
To evaluate our method on a real-world task, we apply observable guidance to guide a pre-trained cgBE to sample conformations of chignolin, a ten-residue mini-protein that serves as a standard benchmark in protein folding studies \citep{Honda2004, Satoh2006, Lindorff-Larsen2011}.
Our task is to correct systematic biases in the equilibrium sampling using only experimental measurements while preserving physical validity. This is a challenging task given the high-dimensional structured space and unknown ground truth distribution.\looseness=-1

\textbf{Experimental Setup.} We use folding free energy $\Delta G = - k_BT\log (\frac{p_{\text{folded}}}{p_{\text{unfolded}}})$ as our observable, which captures the relative stability of different protein conformations. The reference model $p_{\text{MD}}$ shows significant bias in this metric ($-1.27$ kcal/mol vs. experimental value of $-1.87$ kcal/mol \citep{Honda2004}), making it a suitable test case. Model architecture and training details are provided in \cref{sec:appendix:observable-guidance}.

\textbf{Evaluation.} Our guided model achieves substantial improvements across several metrics (see \cref{tab:results:chingolin-guidance-metrics}) while maintaining physical validity, which we verified through the analysis of bond lengths and torsion angles (\cref{fig:app:obs-guidance-tica,fig:app:ca-distances,fig:app:torsion-angles}, in the Appendix). The guided model's folding free energy ($-1.82 \pm 0.01$ kcal/mol) closely matches the target experimental value ($-1.87$ kcal/mol), reducing mean squared error by an order of magnitude from 0.6 to 0.05 kcal/mol. Additionally, the KL divergence from the reference MD trajectory improves from $0.329$ to $0.005 \pm 0.002$, demonstrating better conservation of the  properties of the reference distribution, including multi-modality and entropy.
\cref{fig:experiments:chingolin-guidance} visualizes these improvements.  
Panels A and B demonstrate that guidance successfully increases the population of folded states, consistent with experimental observations. 
Panel C shows 50 superimposed generated structures, highlighting both the diversity and physical validity of our samples.

\begin{figure}
    \centering
    \includegraphics[width=\linewidth]{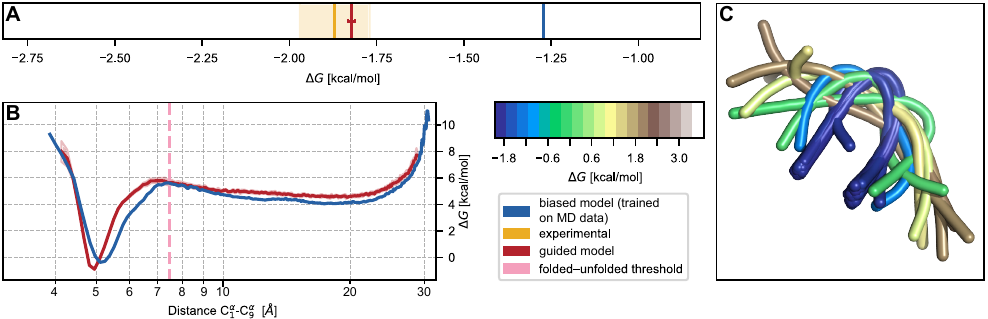}
    \caption{\textbf{Observable Guidance of a Protein Folding Model.} (A) Folding free energy comparison between reference model (blue), experimental data (yellow), and guided model (red). (B) Free energy profiles as a function of N- to C-terminal C$^\alpha$ distance. (C) Ensemble of 50 generated protein structures colored by their energy.\looseness=-1}\label{fig:experiments:chingolin-guidance}
\end{figure}

These results demonstrate that guidance in the sparse-data regime with MEW regularization allows allows us to effectively align high-dimensional and highly structured generative models with  experimental constraints without compromising local physical validity and maintaining global distributional properties such as multi-modality.

\subsection{Path Guidance}
\label{sec:experiments:path_guidance}
We now use the cgBE to evaluate path guidance with MEW regularization for up-sampling high-energy transition configurations (states), which are critical for understanding the folding process of proteins. Due to their high energy (\cref{eq:background:boltzmann-distribution}), these states account for only 1\% of both the data and model distribution, making their successful up-sampling a strong demonstration of our method's effectiveness. Consistent with \cref{sec:experiments:obs_guidance}, we also use the chignolin mini-protein as a test system. To contextualize path guidance, we first introduce an alternative baseline. 

\textbf{Baseline.} As a natural alternative to path guidance, we adapt loss guidance to our setting by using the log-likelihood of a KDE fitted on guiding points $\mathcal{X}_0^g = \{\x_0^i\}_{i=1}^M$. Specifically, we will change the perturbation Kernel from \cref{eq:theory:path_guidance} to $\mathcal{K}_{h_t(\varphi)}(\pmt, \mathcal{X}_0^g)$. While it appears similar to path guidance, the key difference lies in the space the KDE is computed. In path guidance, the kernel is applied along the trajectory $\{\mathcal{X}_t^g\}_{t=0}^{t=1}$, resulting in a distinct KDE for each time step $t$. In contrast, loss guidance computes the KDE in data space and estimates the likelihood with respect to the posterior mean $\pmt$, which requires backpropagating through the model at every sampling step. Implementation details and ablation studies for two alternative baselines that do not augment the vector field are provided in \cref{sec:appendix:path-guidance} and \cref{app:baseline_experiments}.

\begin{figure}
    \centering
    \includegraphics[width=1.01\linewidth]{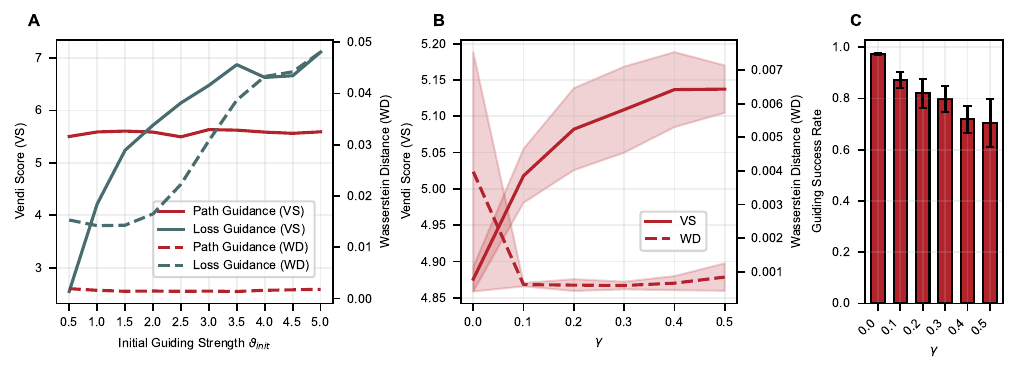}
    \caption{\textbf{Path Guidance vs. Loss-Guidance for sampling Transition States}. (A) Sample quality and diversity, measured by the Wasserstein Distance (WD) and Vendi score (VS), show that path guidance preserves diversity and quality even at high guiding strengths, whereas loss guidance deteriorates. (B) Without MEW regularization ($\gamma = 0$), the sampled transition states collapse and exhibit little to no diversity (VS). Regularization also improves sample quality (WD). (C) Guiding success rate, measured as the percentage of transition states sampled, for different regularization strengths.}
    \label{fig:experiments:chingolin-path-guidance}\vspace{-0.5cm}
\end{figure}

\textbf{Evaluation Criteria.} We assess the methods using three key metrics. First, we measure guiding success as the percentage of sampled transition configurations (see \cref{app:TICA} for details). Second, we evaluate the diversity among transition states using the Vendi score (VS) \citep{friedman2022vendi} to verify that our method generates novel samples rather than merely resampling the guiding data. Lastly, since we cannot evaluate the energy under the coarse-grained model, we instead ensure physical validity of the generated samples under guidance by computing the Wasserstein distance (WD) between the bond length distributions of generated and ground truth samples, which quantifies how well our method preserves the local molecular structure. 

\textbf{Synthetic Experiments.} To explore the dynamics of both methods, we design a synthetic example (see \cref{sec:path_guidance:synthetic_example} for details). While ODE sampling offers better control, it often produces degenerate samples, as the guiding term $\hvxt$ can push trajectories off the data manifold, an issue SDE sampling can correct for through noise injection. As a result, SDE sampling proves more robust, maintaining sample diversity and quality while achieving comparable guidance success. Based on these findings, we use SDE sampling for all subsequent experiments. Additionally, we tested various guiding strength $\guidingStrength_t(\vartheta)$ and KDE bandwidth $h_t(\varphi)$ functions, finding step-like sigmoid functions both effective and easy to optimize. 

\textbf{Transition State Sampling.} For the transition configuration sampling task, we adapt the kernel to handle rigid-body transformations using the Kabsch algorithm \citep{kabsch1976solution} akin to that adopted in \cite{Pasarkar2023}. Since we found loss guidance to be difficult to optimize in this application, we first performed a large grid search to identify optimal parameters for a fair comparison. This analysis revealed that increasing the guidance strength deteriorates sample quality in loss guidance, preventing it from achieving meaningful guiding success (\cref{fig:appendix:chingolin-path-guidance}B). The performance gap stems from two key disadvantages of loss guidance, both linked to the fact that the KDE is fitted in data space. First, loss guidance requires using the mean of the posterior to compute the augmentation factor, which, especially at early time steps $t$, suffers from very high variance. Second, at later time steps, while the predictions become more accurate, the KDE fails to capture the actual distribution of the guiding points effectively, as it is not well-suited for high data complexity. As a result, the loss signal can degrade the sampled data, which is evident from the increasing Wasserstein distance as the guiding strength $\vartheta_{\textit{init}}$ increases (\cref{fig:experiments:chingolin-path-guidance}A). In contrast, path guidance circumvents this issue by applying stronger guidance at higher time steps, where the latent is primarily noise, and decreasing the guiding strength over time. Notably, in \cref{fig:experiments:chingolin-path-guidance}A, we observe that both the quality and diversity remain largely unaffected by the initial guiding strength $\vartheta_{\textit{init}}$. We further investigate the difference between path and loss guidance in \cref{sec:path_guidance:ablation}.

After observing that loss guidance could not be reliably optimized, we conducted a separate set of experiments to evaluate path guidance within the MEW framework by optimizing the objective in \cref{eq: objective function}. Disabling regularization ($\gamma = 0$) results in the highest guidance success rates (\cref{fig:experiments:chingolin-path-guidance}C), but produces highly degenerate samples and reduced structural diversity, as indicated by the large variance in Wasserstein distance. In contrast, applying MEW regularization improves both sample quality and diversity (\cref{fig:experiments:chingolin-path-guidance}B), while incurring only a modest reduction in guidance success.  Overall, our results demonstrate that path guidance offers a strong alternative to loss guidance, and that MEW regularization is essential for robust and physically meaningful sampling in data-sparse regimes.
 
\section{Related Works}

\textbf{Guidance of diffusion models.}
Guiding diffusion models during inference has been widely studied, as it can significantly improve the quality of generated data or steer the model toward desired solutions. While common methods rely on training a classifier on the noisy states \citep{song2021scorebasedgenerativemodelingstochastic} or conditioning the diffusion model on additional information during training \citep{ho2022classifier, nichol2021glide}, much recent work has focused on guiding diffusion models without the need for additional training. These methods treat the pre-trained diffusion model as a prior and guide the inference process using a loss signal \citep{bansal2023universal, chung2023diffusion, song2023loss, kawar2022denoising, ben2024d}. Our approach aligns with these methods but is motivated through a regularization framework inspired by thermodynamic principles.

\textbf{Schr\"odinger bridges.}
There are strong conceptual connections between the minimum-excess-work guidance approach outlined here and the {\it Schr\"odinger bridge (SB) problem} \citep{schrodinger1931umkehrung}: given a reference stochastic process $\{\x_t\}$ with path measure $\Ppath$, e.g., Brownian motion, find the stochastic process $\{\x'_t\}$ with path measure $\Ppath'$ that minimizes $\KL(\Ppath' \| \Ppath)$ subject to the marginal constraints $\x'_0 \sim \mu_0$ and $\x'_1 \sim \mu_1$ for some $\mu_0,\mu_1\in\mathcal{P}(\R^d)$.
This objective, when $\{\x'_t\}$ is defined through a time-dependent control (guiding vector field), is very similar to the bounds we derived \citep{bunne2023optimal}. The SB problem has recently inspired a plethora of methodological developments in the field \citep{2306.09099,pmlr-v216-somnath23a,pmlr-v206-bunne23a}.

\textbf{Transition ensemble sampling.} 
Traditional methods like transition path sampling \citep{Bolhuis2000, Cabriolu2017} use Monte Carlo in trajectory space, while recent machine learning approaches \citep{Liu2025} employ neural networks but require extensive training data or predefined collective variables. Instead of explicit path sampling, we guide the generative process using latent representations of known transition states. While related to recent work using Boltzmann Generators \citep{Plainer2024}, our approach directly modifies the score function during sampling rather than performing MCMC moves between paths, enabling more efficient exploration of transition regions.

\textbf{Reweighting with experimental data.}
Reweighting molecular dynamics simulations using experimental data has a long history in computational chemistry and biophysics. Theoretical work \citep{Roux2013, Cavalli2013, Boomsma2014, Pitera2012} adopted \citet{Jaynes1957} Maximum Entropy approach to the problem, following several early experimental studies \citep{LindorffLarsen2005, Dedmon2004,Cesari2018} based on replica-averaged simulations, giving a theoretical foundation for these approaches. This work was later complemented by probabilistic and Bayesian perspectives \citep{Olsson2013,Bottaro2020,Bonomi2016, Bonomi2018}, some of which specifically focused on reweighing \citep{Hummer2015, Olsson2016, Olsson2017, Kolloff2023}.

\section{Limitations}
\label{sec:limitations}
Despite the strong empirical performance of MEW guidance across diverse settings, several limitations merit consideration. These primarily stem from the assumptions underpinning the method’s application---for instance, that physical observables or representative samples can be leveraged to correct expectation values or guide sampling in low-density regions. While these assumptions do not demand perfect model accuracy, they do require that the model be sufficiently expressive and responsive to the applied guidance. If key modes are absent, convergence to meaningful distributions may be compromised. Additionally, the current framework assumes differentiable observables and guidance targets, restricting its applicability in discrete or non-differentiable domains. Lastly, although scalability has been demonstrated in moderate settings, further evaluation is necessary for high-dimensional problems and large-scale simulators.

\section{Conclusion}
In this work, we introduced minimum-excess-work (MEW) guidance, a physics-inspired framework for regularizing the guidance of pre-trained probability flow generative models by regularizing \emph{excess work}. Our analysis shows that this thermodynamically motivated regularization is closely connected to upper bounds on the Wasserstein distance and the KL divergence between the reference and guided distributions.
We demonstrated the effectiveness of MEW regularization in two settings: \emph{Observable Guidance} and \emph{Path Guidance}. These approaches enable alignment with sparse experimental constraints and targeted sampling in low-density regions, while maintaining model flexibility. By penalizing excess work, our method reduces bias and enhances the sampling of rare, physically meaningful configurations, without degrading sample quality. Our results position MEW guidance as a principled and effective tool for bias correction and informed exploration in data-scarce scientific applications.\looseness=-1

\begin{ack}
This work was partially supported by the Wallenberg AI, Autonomous Systems and Software Program (WASP) funded by the Knut and Alice Wallenberg Foundation, the Helmholtz Foundation Model Initiative, and the
Helmholtz Association. The authors gratefully acknowledge the Gauss Centre for Supercomputing e.V. (www.gauss-centre.eu) for funding this project by providing computing time through the John von Neumann Institute for Computing (NIC) on the GCS Supercomputer JUPITER | JUWELS \citep{JUWELS} at Jülich Supercomputing Centre (JSC). 
Preliminary results were enabled by resources provided by the National Academic Infrastructure for Supercomputing in Sweden (NAISS) at Alvis (project: NAISS 2023/22-1289), partially funded by the Swedish Research Council through grant agreement no. 2022-06725.
The authors also acknowledge the MIT Office of Research Computing and Data for providing high-performance computing resources that have contributed to the research results reported within this paper. TH and AD acknowledge support from G-Research.
\end{ack}

\bibliographystyle{abbrvnat}
\bibliography{mew.bib}

\clearpage
\appendix

\section{Proofs}
\label{app:proofs}

\subsection{Short Derivation of Maximum Entropy Reweighting of MD Trajectories using Observables}
\label{sec:maxent-reweighting-derivation}
The maximum entropy approach \citep{Jaynes1957} has been widely adopted \citep{Hummer2015, Boomsma2014, Olsson2016, Olsson2017, Bottaro2020} to derive reweighting schemes to find a minimally biased probability distribution that satisfies experimental constraints. 

Consider a reference probability distribution $p(\x)$, e.g., an empirical distribution estimated from MD simulation data, and an unknown target distribution $p'(\x)$ that should match experimental measurements. Following Jaynes' maximum entropy principle, we seek to minimize the KL divergence from $p(\x)$ to $p'(\x)$ subject to the constraint that the expectations of observables $O_i(\x)$ under $p'(\x)$ match their experimental values $o_i$. That is,
\begin{equation}
\min_{p'} \int p'(\x) \log \frac{p'(\x)}{p(\x)} \dx
\end{equation}
subject to:
\begin{align}
\mathbb{E}_{p'(\x)}[O_i(\x)] &= o_i \ \text{ for } i=1,\ldots,M \\
\int p'(\x) \dx &= 1
\end{align}

Using the method of Lagrange multipliers, we obtain the following objective:
\begin{equation}
    S = -\int p'(\x)\log \frac{p'(\x)}{p(\x)} \dx + \sum_{i=1}^M \lambda_i\left(\int p'(\x)O_i(\x) \dx - o_i\right) + \mu\left(\int p'(\x) \dx - 1\right)
\end{equation}
where $\{\lambda_i\}_{i=1}^M$ are the Lagrange multipliers for the constraints on the $M$ observables, and $\mu$ is the multiplier for density normalization.
Setting the functional derivative $\delta S/\delta p'$ to zero yields
\begin{equation}
    -\log \frac{p'(\x)}{p(\x)} - 1 + \sum_{i=1}^M \lambda_i O_i(\x) + \mu = 0 \ .
\end{equation}
Finally, solving for $p'(\x)$ and determining $\mu$ through normalization gives
\begin{equation}
    p'(\x) \propto p(\x) \exp\left(-\sum_{i=1}^M \lambda_i O_i(\x)\right) \ ,
\end{equation}
where the $\lambda$s are determined, e.g., following \citet{Bottaro2020}, such that the constraints on the expectations are satisfied.
This reweighted distribution represents the maximum entropy solution that satisfies the experimental constraints while minimizing the bias introduced relative to the reference distribution $p(\x)$.

\subsection{Bounding the Wasserstein distance}
\label{sec: wasserstein proof}

In this section, we derive an upper bound on the squared Wasserstein distance $W_2^2(p_0, p'_0)$, where the distributions $p_0$ and $p'_0$ are obtained by evolving a common terminal distribution $p_1 = p'_1$ backward in time according to the ODEs in \cref{eq:background:pf_ode_approximate,eq:theory:augmented-ODE}.
We begin by proving a Grönwall-type lemma (see, e.g., \citet[Lemma~2.1.2]{bressan2007introduction}) that will be useful to prove our result.

\begin{lemma}
    \label{thm: gronwall lemma}
    Let $T>0$ and let $f$ be an absolutely continuous function over $[0,T]$ satisfying the differential inequality
    \begin{align}
        \frac{\diff}{\dt} f(t) \leq a(t) f(t) + b(t) \qquad \text{for a.e. $t \in [0,T]$,}
        \label{eq: lemma gronwall premise}
    \end{align}
    where $a,b \in L^1([0,T])$ are integrable functions.
    Then, for every $t \in [0,T]$,
    \begin{align}
        f(t) 
        &\leq \exp\left( \int_0^t a(u) \du \right) f(0) + \int_0^t  \exp\left( \int_s^t a(u) \du \right) b(s) \ds \ .
    \end{align}
\end{lemma}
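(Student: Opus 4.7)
The plan is to use the classical integrating factor trick adapted to the absolutely continuous setting. Define $\mu(t) \coloneqq \exp\!\left(-\int_0^t a(u)\du\right)$, which is well-defined, strictly positive, and absolutely continuous on $[0,T]$ since $a \in L^1([0,T])$; by the fundamental theorem of calculus for AC functions, $\mu'(t) = -a(t)\mu(t)$ for a.e.\ $t$. Because the product of two absolutely continuous functions is absolutely continuous, $\mu f$ is AC and the Leibniz rule holds almost everywhere.

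Next I would compute $\frac{\diff}{\dt}[\mu(t) f(t)] = -a(t)\mu(t)f(t) + \mu(t) f'(t)$ a.e., and substitute the hypothesis \eqref{eq: lemma gronwall premise}. Using $\mu(t) \geq 0$, the inequality $f'(t) \leq a(t) f(t) + b(t)$ multiplies through cleanly to yield $\frac{\diff}{\dt}[\mu(t) f(t)] \leq \mu(t) b(t)$ a.e. I would then integrate both sides from $0$ to $t$; since both sides are derivatives of AC functions, the standard fundamental theorem of calculus gives
\begin{align*}
    \mu(t) f(t) - \mu(0) f(0) \leq \int_0^t \mu(s) b(s) \ds \ .
\end{align*}

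Using $\mu(0) = 1$ and multiplying both sides by $\mu(t)^{-1} = \exp\!\left(\int_0^t a(u)\du\right) > 0$, together with the identity $\mu(t)^{-1} \mu(s) = \exp\!\left(\int_s^t a(u) \du\right)$, produces exactly the claimed bound.

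The only point requiring care is that $f$ is only assumed absolutely continuous rather than $C^1$, so a.e.\ differentiability, the product rule a.e., and the fundamental theorem of calculus must be invoked in their AC forms; these are standard and pose no real obstacle. I do not expect any substantive difficulty beyond this bookkeeping, since integrability of $a$ and $b$ is exactly what is needed for all appearing integrals to be finite.
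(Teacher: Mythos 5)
Your proof is correct and uses essentially the same integrating-factor argument as the paper; the only cosmetic difference is that you apply the product rule to $\mu f$ before integrating, whereas the paper multiplies the hypothesis by $\psi$, integrates, and then uses integration by parts, which is just the same manipulation performed in a different order. Your care about the almost-everywhere product rule, absolute continuity of $\mu f$, and the FTC in its AC form is exactly the right bookkeeping.
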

\begin{proof}
Define the absolutely continuous function
$$ \psi(t) \coloneqq \exp\left( - \int_0^t a(u) \du\right) $$
and note that $\psi(t) > 0$, $\psi(0) = 1$, and
$$ \frac{\diff}{\dt} \psi(t) = -a(t)\psi(t) \ .$$
Multiplying both sides of \cref{eq: lemma gronwall premise} by $\psi(t)$ and integrating from 0 to $t$, we have
\begin{align}
    \int_0^t \psi(s) \frac{\diff}{\ds} f(s) \ds &\leq \int_0^t \psi(s) a(s) f(s) \ds + \int_0^t \psi(s) b(s) \ds \\
    \psi(t)f(t) - \psi(0) f(0) - \int_0^t \psi'(s) f(s) \ds &\leq \int_0^t \psi(s) a(s) f(s) \ds + \int_0^t \psi(s) b(s) \ds \\
    \psi(t)f(t) - f(0) + \int_0^t a(s) \psi(s) f(s) \ds &\leq \int_0^t \psi(s) a(s) f(s) \ds + \int_0^t \psi(s) b(s) \ds \\
    \psi(t)f(t) &\leq f(0) + \int_0^t \psi(s) b(s) \ds \ .
\end{align}
We then divide both sides by $\psi(t)$ again to conclude:
\begin{align}
    f(t) 
    &\leq \frac{f(0)}{\psi(t)} + \int_0^t \frac{\psi(s)}{\psi(t)} b(s) \ds \\
    &= \exp\left( \int_0^t a(u) \du \right) f(0) + \int_0^t  \exp\left( \int_s^t a(u) \du \right) b(s) \ds
\end{align}
\end{proof}

\begin{proposition}
    \label{thm: wasserstein bound forward}
    Let $T>0$, and let $\v, \v' : [0,T] \times \R^d \to \R^d$ be measurable in time and $L_t$-Lipschitz in space, with $L_t$ integrable. Let $p_0$ be a probability measure on $\R^d$, and define $p_t, p_t'$ as the pushforwards of $p_0$ under the flows of the ODEs
    $\frac{\dx_t}{\dt} = \v_t(\x_t)$ and $\frac{\dx'_t}{\dt} = \v'_t(\x'_t)$.
    Then for all $t \in [0,T]$,
    \begin{align}
        W_2^2(p_t, p'_t) \leq \int_0^t  \exp\left( t-s+ 2 \int_s^t L_u \du \right) \E_{\x \sim p'_s} \left[ \|\v_s(\x) - \v'_s(\x)\|^2 \right] \ds  \ .
    \end{align}
\end{proposition}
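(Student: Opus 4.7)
The plan is to couple the two flows synchronously by integrating both ODEs forward from a common initial sample $\x_0 \sim p_0$, producing paired trajectories $\{\x_t\}$ and $\{\x'_t\}$ with marginals $p_t$ and $p'_t$ respectively. Since $W_2^2$ is the infimum of $\E\|\cdot-\cdot\|^2$ over all couplings, this synchronous coupling immediately yields $W_2^2(p_t, p'_t) \le \E\|\x_t - \x'_t\|^2$, so the whole problem reduces to controlling the right-hand side as a function of $t$.

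To control it, I would differentiate the squared distance along the coupled flow,
$\frac{\diff}{\dt}\|\x_t - \x'_t\|^2 = 2\langle \x_t - \x'_t,\, \v_s(\x_t) - \v'_s(\x'_t)\rangle|_{s=t}$,
and use the standard splitting $\v_t(\x_t) - \v'_t(\x'_t) = [\v_t(\x_t) - \v_t(\x'_t)] + [\v_t(\x'_t) - \v'_t(\x'_t)]$. For the first summand, the $L_t$-Lipschitz assumption on $\v_t$ with Cauchy--Schwarz gives $2\langle \x_t - \x'_t,\, \v_t(\x_t) - \v_t(\x'_t)\rangle \le 2 L_t \|\x_t - \x'_t\|^2$. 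For the second summand, Young's inequality $2\langle a,b\rangle \le \|a\|^2 + \|b\|^2$ yields $2\langle \x_t - \x'_t,\, \v_t(\x'_t) - \v'_t(\x'_t)\rangle \le \|\x_t - \x'_t\|^2 + \|\v_t(\x'_t) - \v'_t(\x'_t)\|^2$. Adding the two and noting that the ``$+1$'' is what produces the $e^{t-s}$ factor in the claim fixes the shape of the forthcoming differential inequality.

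Setting $f(t) \coloneqq \E\|\x_t - \x'_t\|^2$, exchanging differentiation and expectation (justified by Lipschitzness), and using that by construction $\x'_t \sim p'_t$ along the coupling so $\E\|\v_t(\x'_t) - \v'_t(\x'_t)\|^2 = \E_{\x \sim p'_t}\|\v_t(\x) - \v'_t(\x)\|^2$, I get $f(0) = 0$ and the differential inequality $f'(t) \le (1 + 2L_t) f(t) + b(t)$ with $b(t) \coloneqq \E_{\x \sim p'_t}\|\v_t(\x) - \v'_t(\x)\|^2$. Applying \cref{thm: gronwall lemma} with $a(u) = 1 + 2 L_u$ and $f(0) = 0$ then closes the argument, since $\int_s^t a(u)\du = (t-s) + 2\int_s^t L_u\du$ is exactly the exponent appearing in the proposition.

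The bulk of the argument is routine; the only genuinely delicate point I expect is the regularity bookkeeping needed to differentiate under the expectation and to guarantee that $f$ is absolutely continuous so that \cref{thm: gronwall lemma} is applicable. Both follow from the measurable-in-time, Lipschitz-in-space hypotheses, which ensure well-posedness of the two flows and continuous dependence on the initial condition; a density / approximation argument suffices if $\v,\v'$ are merely measurable in $t$.
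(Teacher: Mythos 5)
Your proposal is correct and matches the paper's proof essentially step for step: both use the synchronous coupling $(\phi_t,\phi'_t)_*p_0$ to bound $W_2^2$ by $\E\|\x_t-\x'_t\|^2$, split the velocity difference into a Lipschitz part and a drift-mismatch part (Cauchy--Schwarz plus Young), pass to a differential inequality for $f(t)=\E\|\x_t-\x'_t\|^2$ with $a(t)=1+2L_t$, and close with the stated Gr\"onwall lemma using $f(0)=0$.
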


\begin{proof}
Let $\phi_t,\phi'_t$ be the flows of the ODEs, i.e., $\x_t = \phi_t(\x_0)$, $\frac{\diff \phi_t(\x)}{\dt} = \v_t(\phi_t(\x))$,
and similarly for $\x'$ and $\phi'_t$.
Define the coupling:
\begin{align}
    \pitilde_t \coloneqq (\phi_t, \phi'_t)_* p_0 \in \Gamma(p_t,p'_t) \ ,
\end{align}
i.e., the pushforward of $p_0$ through the map $\x \mapsto \left(\phi_t(\x), \phi'_t(\x)\right)$.
By definition of 2-Wasserstein distance, we can write:
\begin{align}
    W_2^2(p_t, p'_t) 
    \leq \int \| \x - \x' \|^2 \, \diff\pitilde_t(\x, \x')
    = \E_{(\x_t, \x'_t) \sim \pitilde_t} \left[ \|\x_t - \x'_t\|^2 \right] 
    \label{eq: wasserstein bounded by f}
\end{align}
Take any $\x_0,\x'_0 \in \R^d$ and let $\x_t = \phi_t(\x_0)$ and $\x'_t = \phi'_t(\x'_0)$. Then,
\begin{align}
    \frac{\diff}{\dt} \|\x_t - \x'_t\|^2 
    &= 2 (\x_t - \x'_t) \cdot (\v_t(\x_t) - \v'_t(\x'_t)) \\
    &= 2 (\x_t - \x'_t) \cdot (\v_t(\x_t) - \v_t(\x'_t)) + 2 (\x_t - \x'_t) \cdot (\v_t(\x'_t) - \v'_t(\x'_t)) \label{eq: wasserstein derivation - d/dt ||x-x'||^2}
\end{align}
We bound the first term using the Cauchy--Schwarz inequality and the $L_t$-Lipschitzness of $\v_t$:
\begin{align}
    2 (\x_t - \x'_t) \cdot (\v_t(\x_t) - \v_t(\x'_t)) 
    &\leq 2 \|\x_t - \x'_t\| \ \|\v_t(\x_t) - \v_t(\x'_t)\| \\
    &\leq 2 L_t \|\x_t - \x'_t\|^2 \ .
\end{align}
Using $0 \leq \|\a - \b\|^2 = \|\a\|^2 + \|\b\|^2 - 2\a\cdot\b$ for the second term we have:
\begin{align}
    2 (\x_t - \x'_t) \cdot (\v_t(\x'_t) - \v'_t(\x'_t)) \leq \|\x_t - \x'_t\|^2 + \|\v_t(\x'_t) - \v'_t(\x'_t)\|^2 \ .
\end{align}
Plugging these two bounds into \cref{eq: wasserstein derivation - d/dt ||x-x'||^2}, we get
\begin{align}
    \frac{\diff}{\dt} \|\x_t - \x'_t\|^2 
    &= 2 (\x_t - \x'_t) \cdot (\v_t(\x_t) - \v_t(\x'_t)) + 2 (\x_t - \x'_t) \cdot (\v_t(\x'_t) - \v'_t(\x'_t)) \\
    &\leq 2 L_t \|\x_t - \x'_t\|^2 + \|\x_t - \x'_t\|^2 + \|\v_t(\x'_t) - \v'_t(\x'_t)\|^2 \\
    &= (2 L_t + 1) \|\x_t - \x'_t\|^2 + \|\v_t(\x'_t) - \v'_t(\x'_t)\|^2 \ .
\end{align}
Finally, taking expectations on both sides w.r.t. ${(\x_t, \x'_t) \sim \pitilde_t}$, and exchanging expectation and derivative under standard regularity assumptions, we get:
\begin{align}
    \frac{\diff}{\dt} \E \left[\|\x_t - \x'_t\|^2 \right]
    \leq (2 L_t + 1) \ \E\left[ \|\x_t - \x'_t\|^2 \right] +  \E \left[\|\v_t(\x'_t) - \v'_t(\x'_t) \|^2 \right] \ .
\end{align}
This inequality can be expressed as 
\begin{align}
    \frac{\diff f(t)}{\dt} \leq (2 L_t + 1) f(t) + b(t) \ , \qquad f(0) = 0 \ ,
\end{align}
with
\begin{align}
    f(t) &\coloneqq \E_{(\x_t, \x'_t) \sim \pitilde_t} \left[ \|\x_t - \x'_t\|^2 \right] \\
    b(t) &\coloneqq \E_{\x'_t \sim p'_t} \left[ \|\v_t(\x'_t) - \v'_t(\x'_t)\|^2 \right] \ .
\end{align}
Applying \cref{thm: gronwall lemma} with $a(t) = (2 L_t + 1)$, we get:
\begin{align}
    f(t) 
    &\leq \int_0^t  \exp\left( \int_s^t (2 L_u + 1) \du \right) b(s) \ds \\
    &= \int_0^t e^{t-s} \exp\left(2 \int_s^t L_u \du \right)  b(s) \ds 
\end{align}
Since from \cref{eq: wasserstein bounded by f} we know that $W_2^2(p_t, p'_t) \leq f(t)$, the statement follows:
\begin{align}
    W_2^2(p_t, p'_t) \leq \int_0^t e^{t-s} \exp\left(2 \int_s^t L_u \du \right) \E_{\x \sim p'_s} \left[ \|\v_s(\x) - \v'_s(\x)\|^2 \right] \ds  \ .
\end{align}

\end{proof}

Although the result in the time-reversed case is straightforward as it directly follows from a time reparameterization, we state it and prove it for the sake of completeness.

\begin{proposition}
\label{thm: wasserstein bound backward}
    Let $\v, \v' : [0,1] \times \R^d \to \R^d$ be measurable in time and $L_t$-Lipschitz in space, with $L_t$ integrable. Let $p_0,p'_0$ be probability measures on $\R^d$, and define $p_t, p_t'$ as the pushforwards of $p_0$ under the flows of the ODEs
    $\frac{\dx_t}{\dt} = \v_t(\x_t)$ and $\frac{\dx'_t}{\dt} = \v'_t(\x'_t)$. Assume $p_1=p'_1$.
    Then,
    \begin{align}
        W_2^2(p_0, p'_0) \leq \int_0^1  \exp\left( t + 2 \int_0^t L_s \ds \right) \E_{\x \sim p'_t} \left[ \|\v_t(\x) - \v'_t(\x)\|^2 \right] \dt \ .
    \end{align}
\end{proposition}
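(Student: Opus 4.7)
The plan is to reduce \cref{thm: wasserstein bound backward} to the forward-in-time result \cref{thm: wasserstein bound forward} by a time-reversal change of variable. Since $p_1 = p'_1$, the setup of the backward proposition becomes, after inverting time, one of two forward flows starting from a common distribution, so the previous proposition applies verbatim.

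Concretely, I will introduce $\tau = 1-t$ and define the reversed vector fields $\tilde{\v}_\tau(\x) \coloneqq -\v_{1-\tau}(\x)$ and $\tilde{\v}'_\tau(\x) \coloneqq -\v'_{1-\tau}(\x)$. By uniqueness of solutions under the integrable-Lipschitz hypothesis, the flow of $\frac{\diff \x}{\diff \tau} = \tilde{\v}_\tau(\x)$ initialized at $p_1$ pushes forward to $\tilde{p}_\tau = p_{1-\tau}$, and analogously $\tilde{p}'_\tau = p'_{1-\tau}$; in particular $\tilde{p}_0 = p_1 = p'_1 = \tilde{p}'_0$, while $\tilde{p}_1 = p_0$ and $\tilde{p}'_1 = p'_0$. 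The reversed fields inherit measurability in time and share the spatial Lipschitz constant $\tilde{L}_\tau = L_{1-\tau}$, which remains integrable on $[0,1]$.

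Applying \cref{thm: wasserstein bound forward} to the reversed data at terminal time $1$, and using $\|\tilde{\v}_\sigma - \tilde{\v}'_\sigma\|^2 = \|\v_{1-\sigma} - \v'_{1-\sigma}\|^2$, yields
\begin{align}
W_2^2(p_0, p'_0) = W_2^2(\tilde{p}_1, \tilde{p}'_1) \leq \int_0^1 e^{1-\sigma} \exp\!\left( 2 \int_\sigma^1 L_{1-u} \, \diff u \right) \E_{\x \sim p'_{1-\sigma}}\!\left[ \|\v_{1-\sigma}(\x) - \v'_{1-\sigma}(\x)\|^2 \right] \diff \sigma \ .
\end{align}
The substitutions $t = 1-\sigma$ in the outer integral and $s = 1-u$ in the inner integral convert $e^{1-\sigma}$ into $e^t$ and $\int_\sigma^1 L_{1-u} \, \diff u$ into $\int_0^t L_s \ds$, producing exactly the claimed bound. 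The only step requiring care is verifying the pushforward identity $\tilde{p}_\tau = p_{1-\tau}$, which follows directly from uniqueness of ODE solutions under the stated regularity assumptions; accordingly, I do not anticipate any substantive obstacle in completing this argument.
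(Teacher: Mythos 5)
Your proposal is correct and matches the paper's proof essentially step for step: both reduce the backward statement to \cref{thm: wasserstein bound forward} via the time-reversal $s = 1-t$ with reversed vector fields $-\v_{1-s}$, verify that the reversed flows start from the common distribution $p_1$, and then undo the substitution in the resulting integral. Your added remark on uniqueness of ODE solutions to justify the pushforward identity is a reasonable elaboration, not a deviation.
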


\begin{proof}
Consider the time reversal transformation $s = 1 - t$. Define $\xtilde_s \coloneqq \x_{1 - s}$ and $\xtilde'_s \coloneqq \x'_{1 - s}$, where $\x_t$ and $\x'_t$ satisfy the original ODEs with vector fields $\v_t,\v'_t$, with $\x_t \sim p_t$, $\x'_t \sim p'_t$, and $p_1 = p'_1$.
Differentiating the reversed processes, we get:
\begin{align}
    \frac{\diff \xtilde_s}{\diff s} = \frac{\diff \x_{1 - s}}{\diff t} \cdot \frac{\diff t}{\diff s} = - \v_{1 - s}(\x_{1 - s}) = -\v_{1 - s}(\xtilde_s)
\end{align}
and similarly for $\xtilde'$.
Thus, the reversed processes satisfy:
\begin{align}
    \frac{\diff \xtilde_s}{\diff s} = \vtilde_s(\xtilde_s) \ , \quad \frac{\diff \xtilde'_s}{\diff s} = \vtilde'_s(\xtilde'_s) \ ,
\end{align}
where we defined the reversed velocity fields $\vtilde_s(\x) \coloneqq -\v_{1 - s}(\x)$ and $\vtilde'_s(\x) \coloneqq -\v'_{1 - s}(\x)$.
From the definitions $\xtilde_s \coloneqq \x_{1 - s}$ and $\xtilde'_s \coloneqq \x'_{1 - s}$ it directly follows that $\ptilde_s = p_{1 - s}$ and $\ptilde'_s = p'_{1 - s}$.
At $s = 0$, we have $\ptilde_0 = p_1 = p'_1 = \ptilde'_0$, so the reversed processes start from the same distribution.

Since $\v_t$ and $\v'_t$ are $L_t$-Lipschitz in space with $L_t$ integrable, $\vtilde_s$ and $\vtilde'_s$ are $L_{1-s}$-Lipschitz. The reversed ODEs start at $s = 0$ from the same distribution ($\ptilde_0 = \ptilde'_0$) and evolve to $\ptilde_1 = p_0$ and $\ptilde'_1 = p'_0$ at $s = 1$. 
Applying \cref{thm: wasserstein bound forward}, we get:
\begin{align}
    W_2^2(\ptilde_1, \ptilde'_1) \leq \int_0^1  \exp\left( 1-s+ 2 \int_s^1 L_{1-u} \du \right) \E_{\x \sim \ptilde'_s} \left[ \|\vtilde_s(\x) - \vtilde'_s(\x)\|^2 \right] \ds  \ .
\end{align}
Substituting $\ptilde_s = p_{1 - s}$ and $\ptilde'_s = p'_{1 - s}$, using the definitions of $\vtilde_t,\vtilde'_t$, and applying a change of variables $t = 1 - s$, we obtain the desired bound:
\begin{align}
    W_2^2(p_0, p'_0) 
    &\leq \int_0^1  \exp\left( 1-s+ 2 \int_s^1 L_{1-u} \du \right) \E_{\x \sim p'_{1-s}} \left[ \|\v_{1-s}(\x) - \v'_{1-s}(\x)\|^2 \right] \ds  \\
    &= \int_0^1  \exp\left( t + 2 \int_{1-t}^1 L_{1-u} \du \right) \E_{\x \sim p'_t} \left[ \|\v_t(\x) - \v'_t(\x)\|^2 \right] \dt \\
    &= \int_0^1  \exp\left( t + 2 \int_0^t L_s \ds \right) \E_{\x \sim p'_t} \left[ \|\v_t(\x) - \v'_t(\x)\|^2 \right] \dt \ .
\end{align}
\end{proof}

In this work, we are specifically interested in the ODEs~\eqref{eq:background:pf_ode_approximate} and~\eqref{eq:theory:augmented-ODE}:
\wassersteinBound*
\begin{proof}
The ODEs~\eqref{eq:background:pf_ode_approximate} and~\eqref{eq:theory:augmented-ODE} have the following vector fields:
\begin{align*}
    \v_t(\x) &= \f(\x, t) - \frac{1}{2} g(t)^2 \s(\x, t) \\
    \v'_t(\x) &= \f(\x, t) - \frac{1}{2} g(t)^2 \left(\s(\x, t) + \h(\x, t)\right) \ .
\end{align*}
The result directly follows by applying \cref{thm: wasserstein bound backward}:
\begin{align}
    W_2^2(p_0, p'_0) 
    &\leq \int_0^1  \exp\left( t + 2 \int_0^t L_s \ds \right) \frac{g(t)^4}{4} \E_{\x \sim p'_t} \left[ \|\htheta(\x, t)\|^2 \right] \dt \ .
\end{align}
\end{proof}

\subsection{Bounding the KL divergence}
\label{sec: kl proof}

\begin{proposition}
\label{thm: kl bound appendix}
Let $p, p' : \R^d \times [0,1] \to \R_{\geq 0}$ be two probability paths over time $t\in [0,1]$, induced by two reverse-time SDEs:
\begin{align}
    \dx_t = \mub_t(\x_t) \dt + g_t \dwtilde_t \ , \qquad
    \dx_t = \mub'_t(\x_t) \dt + g_t \dwtilde'_t 
\end{align}
where $\wtilde_t,\wtilde'_t$ are reverse-time Wiener processes, $\mub,\mub':\R^d \times [0,1] \to \R^d$, and $g : [0,1] \to \R_{>0}$.
Assume that $p_1 = p'_1$, that both SDEs admit strong solutions, and that $\Ppath' \ll \Ppath$, where $\Ppath,\Ppath'$ are the path measures induced by the SDEs on $C([0,1], \R^d)$.
Then:
\begin{align}
    \KL(p'_0 \| p_0) 
    &\leq \frac{1}{2} \int_0^1 \frac{1}{g_t^2} \ \E_{\x \sim p'_t} \left[ \|\mub'_t(\x) - \mub_t(\x)\|^2 \right] \dt \ .
\end{align}
\end{proposition}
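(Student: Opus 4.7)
The plan is to combine a data-processing inequality with Girsanov's theorem, after converting the reverse-time SDEs into standard forward-time SDEs via the change of variable $s = 1 - t$. Define $\xtilde_s = \x_{1-s}$, $\xtilde'_s = \x'_{1-s}$, $\tilde{\mub}_s(\x) = -\mub_{1-s}(\x)$, $\tilde{\mub}'_s(\x) = -\mub'_{1-s}(\x)$, and $\tilde{g}_s = g_{1-s}$. The reverse-time SDEs then become forward-time Itô SDEs driven by standard Brownian motions on $[0,1]$, with the same common initial distribution $\tilde{p}_0 = \tilde{p}'_0 = p_1$ at $s=0$, and terminal marginals $\tilde{p}_1 = p_0$, $\tilde{p}'_1 = p'_0$. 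Because the diffusion coefficient is the same scalar $\tilde{g}_s$ in both SDEs, this puts us in exactly the regime where Girsanov applies cleanly.

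Next I would invoke the data-processing inequality for KL divergence under the evaluation map $\omega \mapsto \omega(1)$ from $C([0,1],\R^d)$ to $\R^d$. This yields
\begin{equation}
    \KL(p'_0 \,\|\, p_0) = \KL(\tilde{p}'_1 \,\|\, \tilde{p}_1) \leq \KL(\tilde{\Ppath}' \,\|\, \tilde{\Ppath}),
\end{equation}
where $\tilde{\Ppath},\tilde{\Ppath}'$ are the path measures induced by the forward SDEs. Since the transformation from reverse to forward time is a measurable bijection of path space, $\KL(\tilde{\Ppath}'\|\tilde{\Ppath}) = \KL(\Ppath'\|\Ppath)$, and the hypothesis $\Ppath' \ll \Ppath$ carries over so that Girsanov is available.

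I would then apply Girsanov's theorem: the Radon--Nikodym derivative $\diff \tilde{\Ppath}' / \diff \tilde{\Ppath}$ is an exponential martingale whose logarithm, after taking expectation under $\tilde{\Ppath}'$, produces the standard quadratic-variation term. Specifically,
\begin{equation}
    \KL(\tilde{\Ppath}' \,\|\, \tilde{\Ppath}) = \frac{1}{2} \E_{\tilde{\Ppath}'} \int_0^1 \frac{\|\tilde{\mub}'_s(\xtilde'_s) - \tilde{\mub}_s(\xtilde'_s)\|^2}{\tilde{g}_s^2} \ds.
\end{equation}
Rewriting this expectation by noting that $\xtilde'_s \sim \tilde{p}'_s = p'_{1-s}$, and then changing variables back with $t = 1-s$, yields
\begin{equation}
    \KL(\Ppath' \,\|\, \Ppath) = \frac{1}{2} \int_0^1 \frac{1}{g_t^2} \, \E_{\x \sim p'_t} \left[ \|\mub'_t(\x) - \mub_t(\x)\|^2 \right] \dt,
\end{equation}
and combining with the data-processing bound gives the statement.

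The main obstacle I expect is the rigorous justification of Girsanov in this setting: one typically needs a Novikov- or Kazamaki-type integrability condition on the drift difference to ensure the candidate density is a true martingale rather than merely a local one, and to identify $\KL(\tilde{\Ppath}'\|\tilde{\Ppath})$ with the quadratic-variation integral above. The statement's hypothesis $\Ppath' \ll \Ppath$ together with the existence of strong solutions is what lets us sidestep a finer discussion: absolute continuity plus the strong-solution assumption guarantee that the exponential local martingale is a true martingale on $[0,1]$ and that the KL identity holds. The reverse-time machinery is otherwise purely cosmetic, since the bound is a scalar functional of drifts and diffusion and is invariant under the time reparameterization.
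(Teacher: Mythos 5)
Your approach is correct and essentially the same as the paper's: both reduce the marginal KL $\KL(p'_0\|p_0)$ to a path-space KL $\KL(\Ppath'\|\Ppath)$ (you via the data-processing inequality for the terminal-time evaluation map; the paper via the chain rule for KL divergence applied at $t=0$ and $t=1$, which is the same inequality in disguise) and then evaluate the path-space KL with Girsanov's theorem, swapping the order of expectation and time integral at the end. The only cosmetic differences are that you explicitly reparameterize to forward-time SDEs before invoking Girsanov while the paper works directly with the reverse-time processes, and that you apply Girsanov to the unconditional path measures (valid since $p_1=p'_1$) whereas the paper disintegrates over the initial point $\x_1^*$ and applies Girsanov to the conditional path measures before averaging.
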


\begin{proof}
By applying the chain rule of the KL divergence \citep[Theorem 2.4]{leonard2014some} at $t=0$ and $t=1$, we have:
\begin{align}
    \KL(\Ppath' \| \Ppath)
    &= \KL(p'_0 \| p_0) + \E_{\x_0^* \sim p'_0} \Big[ \underbrace{\KL( \Ppath'_{\x_0=\x_0^*} \| \Ppath_{\x_0=\x_0^*} )}_{\geq 0} \Big] \\
    \KL(\Ppath' \| \Ppath)
    &= \underbrace{\KL(p'_1 \| p_1)}_{=0} + \E_{\x_1^* \sim p'_1} \left[ \KL( \Ppath'_{\x_1=\x_1^*} \| \Ppath_{\x_1=\x_1^*} ) \right] \ .
\end{align}
The subscripts on the path measures denote conditioning on the value of the process at a specific time (by disintegration of path measures).
We can therefore bound $\KL(p'_0 \| p_0)$ by a KL divergence between path measures:
\begin{align}
    \KL(p'_0 \| p_0) \leq \E_{\x_1^* \sim p'_1} \left[ \KL( \Ppath'_{\x_1=\x_1^*} \| \Ppath_{\x_1=\x_1^*} ) \right] \ .
\end{align}
By Girsanov's theorem \citep{oksendal2003stochastic},
\begin{align}
    \KL( \Ppath'_{\x_1=\x_1^*} \| \Ppath_{\x_1=\x_1^*} )
    &= \frac{1}{2} \E_{\Ppath'_{\x_1=\x_1^*}} \left[ \int_0^1 \frac{1}{g_t^2} \|\mub'_t(\x_t) - \mub_t(\x_t)\|^2 \dt \right] \ .
\end{align}
We can now write the iterated expectation as an expectation over the unconditional path measure $\Ppath'$:
\begin{align}
    \E_{\x_1^* \sim p'_1} \left[ \KL( \Ppath'_{\x_1=\x_1^*} \| \Ppath_{\x_1=\x_1^*} ) \right]
    &= \frac{1}{2} \E_{\x_1^* \sim p'_1} \left[ \E_{\Ppath'_{\x_1=\x_1^*}} \left[ \int_0^1 \frac{1}{g_t^2} \|\mub'_t(\x_t) - \mub_t(\x_t)\|^2 \dt \right] \right] \\
    &= \frac{1}{2} \E_{\Ppath'} \left[ \int_0^1 \frac{1}{g_t^2} \|\mub'_t(\x_t) - \mub_t(\x_t)\|^2 \dt \right] \ .
\end{align}
Finally, we switch the expectation and integral (Fubini--Tonelli), and simplify the expectation over $\Ppath'$ into an expectation over the time marginal $p'_{t}$ since the argument of the integral only depends on $t$:
\begin{align}
    \E_{\Ppath'} \left[ \int_0^1 \frac{1}{g_t^2} \|\mub'_t(\x_t) - \mub_t(\x_t)\|^2 \dt \right] = \int_0^1 \frac{1}{g_t^2} \ \E_{\x \sim p'_t} \left[ \|\mub'_t(\x) - \mub_t(\x)\|^2 \right] \dt \ ,
\end{align}
which concludes the proof.
\end{proof}

In this work, we are specifically interested in the reverse-time SDEs~\eqref{eq:background:reverse_diffusion_approximate} and~\eqref{eq:theory:augmented-reverse-SDE}:
\klBound*
\begin{proof}
The result directly follows by applying \cref{thm: kl bound appendix} to the drifts of the reverse-time SDEs~\eqref{eq:background:reverse_diffusion_approximate} and~\eqref{eq:theory:augmented-reverse-SDE}.
\end{proof}

\section{Experimental details}
\label{sec:exp-details}
\subsection{Coarse-grained Boltzmann Emulator model architecture and training setup}
\label{sec:training_setup}
The score function in this work is based on the CPaiNN architecture introduced in \citep{schreiner2023implicit} with $n_h = 64$ hidden features and five message passing layers.  The score is calculated in two steps - embedding and processing by CPaiNN. In the embedding step, each node is embedded using a lookup function. The pairwise distances between nodes and the diffusion time $t$ is encoded with a positional embedding as described in \citep{vaswani2017attention}. The embedded $t$ is concatenated to the node features and the resulting vector is projected down to $n_h$ dimensions using an MLP. Additionally, each node is assigned $n_h$ zero-vectors serving as initial equivariant features. 

The embedded graph is processed by the score model and the final equivariant features are read out as the score. 

The score model was trained in a DDPM setup as described in \citep{schreiner2023implicit} using an exponential moving average \citep{tarvainen2017mean} with a decay value of 0.99, batch size of 128, and the Adam optimizer with a learning rate of 0.001, $\beta_1=.9$, $\beta_2=.999$.

\subsection{Analysis of CLN025 MD Trajectory} 
\label{app:TICA}

To evaluate our methods, we calculate pair-wise C$^{\alpha}$ distances of the ten-residue miniprotein and project those features onto the two slowest time-lagged independent components  \citep{Perez-Hernandez2013a} with a lag time $\tau=10 \, \text{ns}$. We then clustered the MD trajectory into $n=128$ states using KMeans. The discretized trajectory was then used for estimating a Markov State Model (MSM) \citep{Prinz2011, Bowman2014, Kolloff2022a} using a lag time of $\tau=10 \, \text{ns}$ \citep{deeptime}. 
For detailed discussions on the background and use of these methods, we refer the reader to \citep{Bowman2009, Pande2010, Prinz2011, Husic2018}.

\subsubsection{Committor Probabilities and Transition States.} 
\begin{wrapfigure}[22]{r}{0.59\textwidth}
    \centering
    \vspace{-6pt}
    \includegraphics[width=\linewidth]{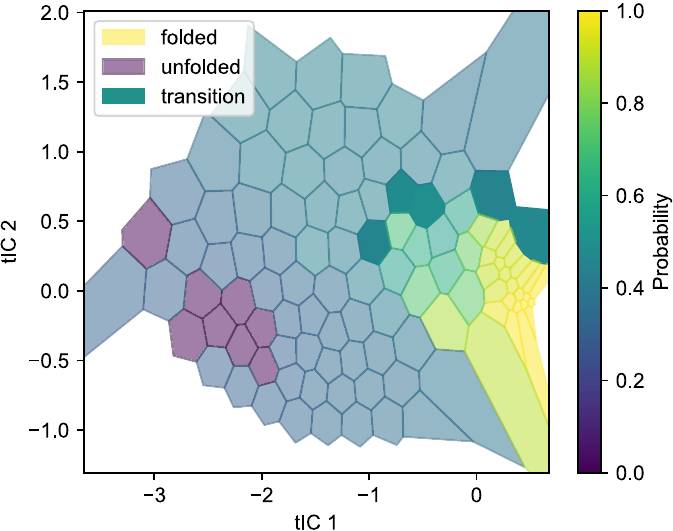}
    \caption{\textbf{Committor Probability Voronoi Diagram.} Each region is colored by its committor probability, where values near 1 correspond to folded states and values near 0 correspond to unfolded states. Regions near 0.5 represent transition states.}
    \label{fig:voronoi_diagram}
\end{wrapfigure}

In order to identify the transition states, we computed the committor probabilities \citep{Metzner2009}, defining transition states as those with values near 0.5. 
If we consider a reactive process of a system on a space $\Omega$ going from a state $A\subset \Omega $ to another state $B\subset \Omega $, s. t. $A\cap B=\emptyset$, the committor $q_i$ describes the probability of reaching state B before A starting from $i$.\citep{E2006} Considering the protein folding process, A is the unfolded state and B is the folded state, $q_i=P(\text{folded first} \mid \text{starting at state i})$. Most importantly in our context, are states with committor values near 0.5, indicating an equal likelihood of folding or unfolding, which are identified as {\it transition states} (\cref{fig:voronoi_diagram}). These states often represent critical bottlenecks in the folding process or in chemical reactions and are thus of significant biophysical and chemical interest.

\subsection{Observable Guidance}
\label{sec:appendix:observable-guidance}
We evaluated our method on two systems: a synthetic one-dimensional model and the chignolin protein system. For both systems, guidance parameters were optimized using Bayesian optimization with Gaussian Processes (GPs) implemented via scikit-optimize \citep{scikit-optimize}. The scaling function took the form $\guidingStrength_t(\vartheta) = \guidingStrength_\text{init} \exp(-\kappa(1-t))$, with system-specific search spaces for $\guidingStrength_\text{init}$ and $\kappa$. All optimizations used 64 function evaluations with a convergence threshold of 1e-5, retaining the 5 best parameter sets. The scaling hyperparameter $\gamma$, which balances observable matching and minimum excess work, was consistently set to 1e-3 after hyperparameter search.

\subsubsection{Synthetic System}
\textbf{Neural Network Architecture and Training.}
Two multilayer perceptron (MLP) networks were trained on the Prinz potential system \citep{Prinz2011} with $k_{\text{B}}=1.38 \cdot 10^{-23}$ and $T=300$ K: one on the unbiased potential and another incorporating a linear bias of -4. Both networks were trained for 15,000 epochs using a batch size of 256 and the Adam optimizer with a learning rate of 1e-3. The networks shared identical architectures, with input dimension corresponding to single-atom ($n_\text{atoms}=1$) one-dimensional data, a time embedding dimension of 3, hidden dimension of 64, and output dimension of 1.
The training process employed a linear beta scheduler with parameters $a=0.1$ and $b=20.0$. This scheduler controlled the noise scale during training, allowing for progressive refinement of the learned distributions.

\text{Observable Function Parameterization.}
For the synthetic system, the observable function was implemented as a Gaussian Mixture Model (GMM) with four components, parameterized as shown in \cref{tab:gmm-params}. The Lagrange multiplier was calculated to be -0.66 following \citep{Bottaro2020}. The parameter search space was defined as $\guidingStrength_\text{init} \in [1.0, 20.0]$ and $\kappa \in [1.0, 20.0]$.

\begin{table}
\centering
\caption{Gaussian Mixture Model Component Parameters}
\label{tab:gmm-params}
\begin{tabular}{cccc}
\toprule
Component & Mean ($\mu$) & Variance ($\sigma^2$) & Weight ($w$) \\
\midrule
1 & 0.30 & 0.01 & 0.35 \\
2 & -0.24 & 0.01 & 0.22 \\
3 & 0.69 & 0.01 & 0.27 \\
4 & -0.71 & 0.01 & 0.16 \\
\bottomrule
\end{tabular}
\end{table}

\subsubsection{Chignolin System}
For the chignolin protein system, we defined the observable function using the interatomic distance between the first and last $\text{C}^\alpha$ atoms ($\text{C}^{\alpha}_1$ and $\text{C}^{\alpha}_{10}$). The folding free energy was calculated as:
\begin{equation}
\Delta G = -k_BT \log\left(\frac{p_f}{1-p_f}\right)
\end{equation}
where $p_f$ represents the fraction of folded samples, defined using a distance cutoff of 7.5~\AA. The Lagrange multiplier was determined to be -0.5 \citep{Bottaro2020}. The parameter search space was set to $\guidingStrength_\text{init} \in [10^{-2}, 1.0]$ and $\kappa \in [1.0, 10.0]$. The optimization process used 256 samples per epoch, with final evaluation conducted on $256 \times 256$ samples to ensure robust statistical assessment.

\subsubsection{Compute Resources and Runtime Details}
\label{sec:appendix:obs-guidance-compute}

Observable guidance experiments were conducted using HPC compute infrastructure equipped with NVIDIA A100 GPUs (80GB memory). Training and evaluation scripts were run on single-GPU nodes. 

For the synthetic system (Section~\ref{fig:experiments:pp-guidance} and Table~\ref{tab:results:toy-observables}), each experiment took approximately 4 min to run, consuming 6 GB GPU memory. The chignolin experiments (e.g., Figure~\ref{fig:appendix:chingolin-path-guidance}) required up to 30 min of compute time per run, and ~30 GB of GPU memory due to the larger input size and batch requirements.
Ablation studies (Figure~\ref{fig:app:ablation-obs}) were conducted with the same hardware and each variant was run across 50 (synthetic case) and 10 (chignolin case) seeds, requiring 1–3 hours per configuration. In total, the reported experiments required approximately 10 GPU hours. Preliminary runs and failed hyperparameter sweeps amounted to an estimated additional 100 GPU hours, not included in the main results.

\subsection{Path Guidance}
\label{sec:appendix:path-guidance}
Similar to observable guidance, path and loss guidance were evaluated on two systems: a synthetic two-dimensional setup and the chignolin mini-protein. We experimented with various functional forms for the guiding strength and time-dependent bandwidth and found that sigmoid-like step functions performed well across both tasks:
\begin{align}
\label{eq:appendix:experiments:parameters}
\guidingStrength_t(\vartheta) &= \vartheta_{\textit{init}} \left(1 - \sigma(\vartheta_g (t - \vartheta_s))\right) \\
h_t(\varphi) &= \varphi_{\textit{init}} + \sigma(\varphi_g (t - \varphi_s))
\end{align}
To optimize the parameter sets $\vartheta$ and $\varphi$ we applied Bayesian optimization with Gaussian Processes (GPs), using the \texttt{scikit-optimize} library. We employed the \texttt{gp\_hedge} acquisition function, which dynamically combines strategies such as Expected Improvement (EI), Probability of Improvement (PI), and Lower Confidence Bound (LCB) based on their empirical performance. After initial exploration, we restricted the search space to a sensible domain to improve optimization efficiency and support a broader sweep of experimental configurations.

\subsubsection{Synthetic System}
\label{sec:appendix:path-guidance:synthetic_system}
We evaluated our method on a simple three-moon example, where the two-dimensional dataset consists of three noisy half-moon arcs generated by sampling from shifted semicircles with optional convexity and added Gaussian noise (see~\cref{fig:path_guidance_toy}). While two of the arcs are well-represented in the training data, only 2.5\% of the samples belong to the third arc, creating a challenging low-data region. We adopt the Conditional Flow Matching (CFM) framework~\citep{lipman2022flow}, from which the score function can be derived for augmentation. To approximate the resulting vector field, we train a four-layer MLP on 10,000 samples for 3,000 steps using a learning rate of $10^{-4}$ and a batch size of 256. Training hyperparameters were selected via a small grid search on an NVIDIA A100 GPU. For optimizing the guidance schedules in \cref{eq:appendix:experiments:parameters}, we run 25 Bayesian optimization steps. To classify whether a sample falls within the target moon, we train a two-layer MLP classifier using a learning rate of $10^{-3}$, 1,000 training steps, and a batch size of 256. For path and loss guidance, we evaluated $\gamma$ values between 0 and 1, finding 0.03 working best for path guidance and 0.1 for loss guidance. For sampling we use 20 guiding points generating 1000 samples in one batch.

\subsubsection{Chignolin System}
\label{sec:appendix:path-guidance:chignolin_system}
The Boltzmann Emulator used for sampling the chignolin system is described in \cref{sec:training_setup}. Since loss guidance could not be reliably optimized via Bayesian optimization, we performed an extensive grid search over hyperparameters, including various functional forms for the schedules in \cref{eq:appendix:experiments:parameters}. This grid search was run for 24 hours on a single NVIDIA H100 GPU and served primarily to investigate the failure modes of loss guidance. The corresponding results are shown in \cref{fig:appendix:chingolin-path-guidance}B. To improve stability, we explored gradient clipping and found it essential for loss guidance. For MEW-guided optimization, we focused exclusively on path guidance. We tested $\gamma$ values between 0 and 1 and found values $\gamma \leq 0.5$ to be effective. Each run consisted of 50 Bayesian optimization steps, with one function evaluation taking approximately 2.5 minutes. As a result, a full optimization run for a fixed $\gamma$ required about two hours on a single NVIDIA H100 GPU (80GB). After each iteration, we computed committor probabilities of the sampled protein conformations using the method described in \cref{app:TICA} to estimate the proportion of transition-state configurations. For each of the 50 guiding points available, we generated 10 samples, leading to a sample batch size of 500. 

\section{Results: Observable Guidance}
\label{sec:appendix:results:obs-guidance}

All error bars for observable guidance were calculated as the standard deviation between $n$ runs ($n=50$ for the 1D energy potential experiments and $n=10$ for the chignolin experiments.

\begin{figure}
    \centering
    \includegraphics[width=\linewidth]{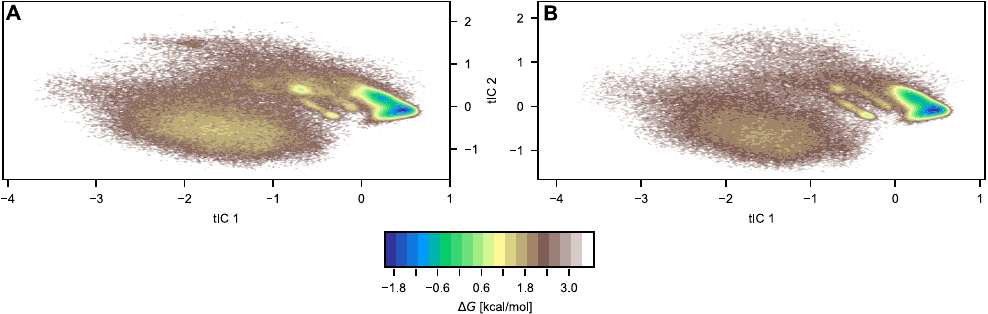}
    \caption{\textbf{tICA Projection of Original and Observable-Guided Model.} State space distribution projected onto the first and second tICs for the original (A) and guided (B) BG. The plots are colored by their respective energies.}
    \label{fig:app:obs-guidance-tica}
\end{figure}

\begin{figure}
    \centering
    \includegraphics[width=\linewidth]{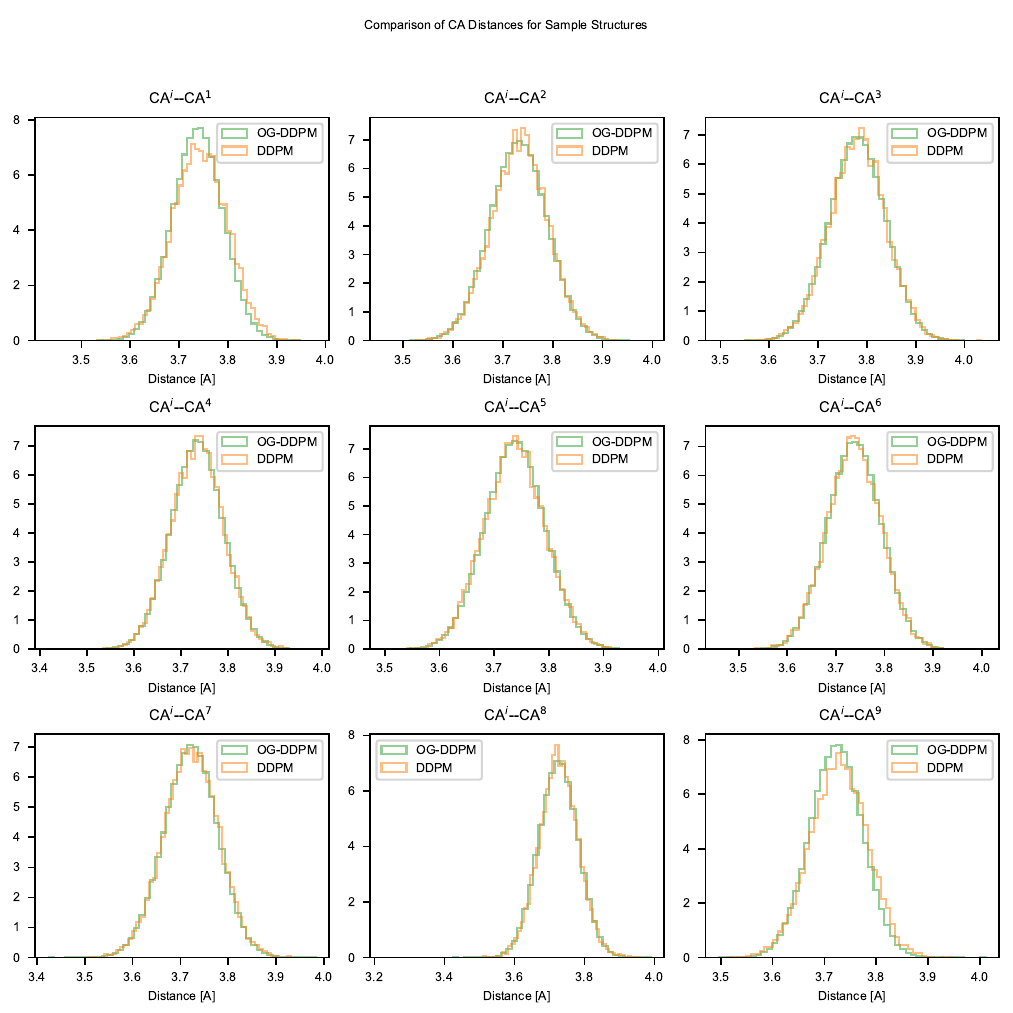}
    \caption{\textbf{Comparison of sequential $\text{C}^\alpha$--$\text{C}^\alpha$ distances between the observable-guided diffusion model (OG-DDPM, green) and the original diffusion model (DDPM, orange).} The plots show the distance distributions for all adjacent $\text{C}^\alpha$ pairs (0--2 through 8--9 using zero indexing) in the protein backbone, showing that the guided model maintains proper protein geometry while achieving the desired constraints.}
    \label{fig:app:ca-distances}
\end{figure}

\begin{figure}[hbt]
    \centering
    \includegraphics[width=\linewidth]{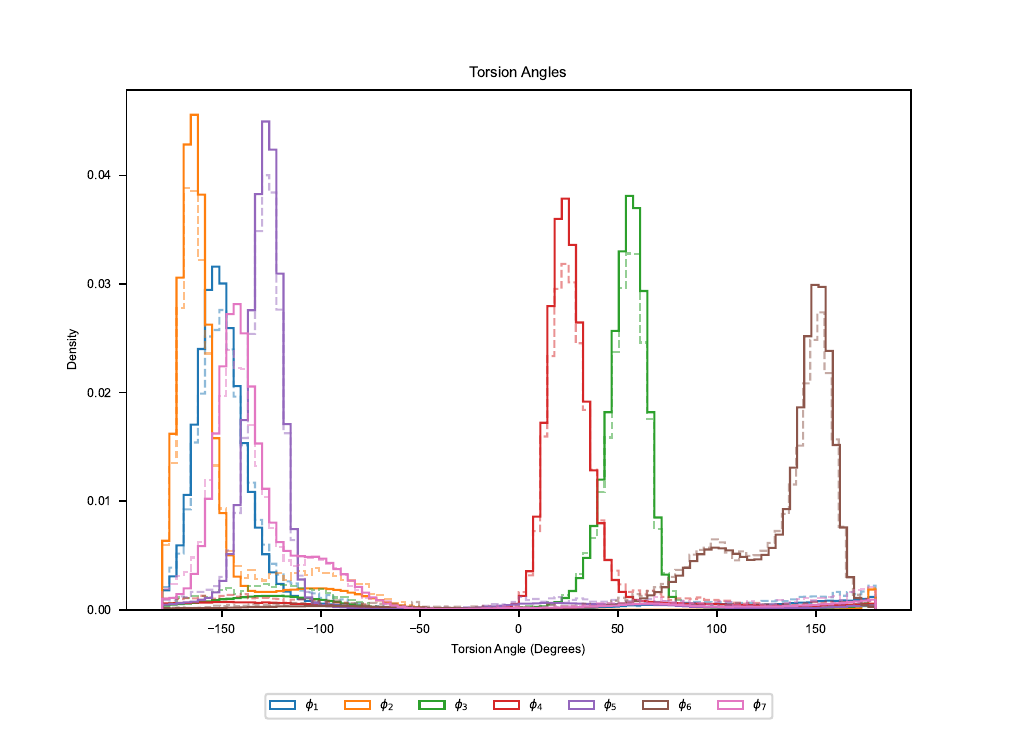}
    \caption{\textbf{Distribution of backbone torsion angles ($\phi_1$ through $\phi_7$) comparing the MD simulation (solid lines) with the observable-guided model (dashed lines).} The close agreement between the distributions indicates that the guided model preserves the native conformational preferences of the protein while satisfying the experimental constraints. Each torsion angle is shown in a different color. The differences between the two densities stem from the guidance procedure. Importantly, the the torsion angles themselves remain the same.}
    \label{fig:app:torsion-angles}
\end{figure}

\subsection{Ablation Studies}

\begin{figure}[hbt]
    \centering
    \includegraphics[width=\linewidth]{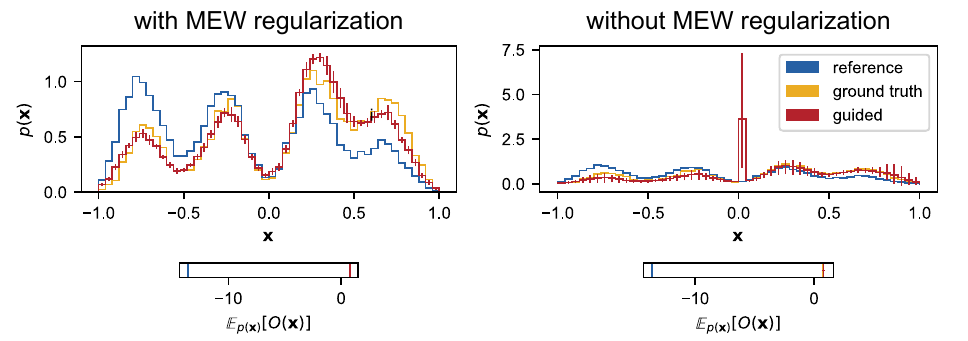}
    \caption{\textbf{Ablation study on MEW regularization in the 1D four-well potential.} Left: With MEW regularization, the guided distribution (red) closely matches both the reference (blue) and ground truth (yellow) distributions. Right: without regularization, guidance leads to mode collapse and overconcentration, resulting in low observable prediction error but poor distributional fidelity. Insets show the expected observable values $\mathbb{E}_{p(\x)}[O(\x)]$.}
    \label{fig:app:ablation-obs}
\end{figure}

\begin{table}[htb]
\centering
\caption{Metrics for $O(\x)$ and KL divergence with and without MEW regularization.}
\begin{tabular}{lcc}
\toprule
Model $\mathcal{M}$  & $\mathbb{E}_{p_\mathcal{M}(\x)}[O(\x)]$ & $\text{KL}[p_\text{GT}(\x) \| p_\mathcal{M}(\x)]$ \\ \midrule
w/o MEW    & $0.131$   & $0.754 \pm 1.533$ \\
w/ MEW     & $0.131$   & $0.029 \pm 0.007$ \\ \bottomrule
\end{tabular}
\label{tab:results:ablation-observables}
\end{table}

\section{Results: Path Guidance}

\subsection{Synthetic System}
\label{sec:path_guidance:synthetic_example}

Before applying our method to the Boltzmann Generator on the chignolin system, we first evaluated it on a simple three-moon example (\cref{fig:path_guidance_toy}; see \cref{sec:appendix:path-guidance:synthetic_system} for implementation details). This setup offers a useful testbed, as the low-data region is connected to a high-density area while remaining well-separated from the other half-moon. The objective of guidance in this case is to enable transitions into the low-density region without deviating off the underlying data manifold connecting the moons.
 
\begin{figure}[ht]
    \centering
    \begin{minipage}[t]{0.48\textwidth}
        \centering
        \includegraphics[width=\linewidth]{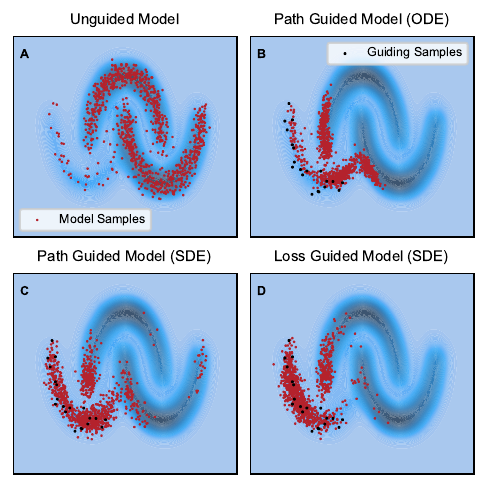}
        \caption{\textbf{Sampling the synthetic system.} Comparison of unguided, path-guided, and loss-guided models using both SDE and ODE samplers.}
        \label{fig:path_guidance_toy}
    \end{minipage}
    \hfill
    \begin{minipage}[t]{0.48\textwidth}
        \centering
        \includegraphics[width=\linewidth]{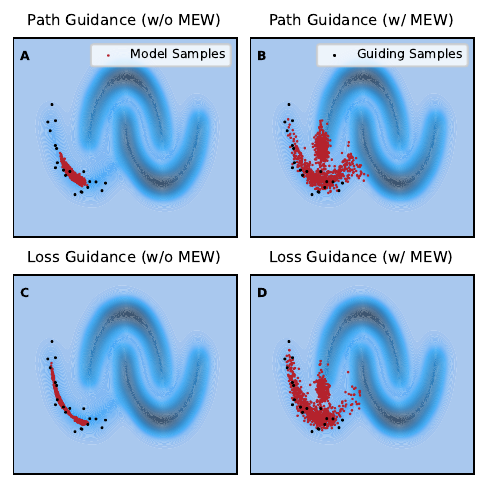}
        \caption{\textbf{Guidance with and without MEW regularization}. MEW guidance ensures that we do not collapse onto the guiding points.}
        \label{fig:path_guidance_toy_MEW}
    \end{minipage}
\end{figure}
We observe that with ODE sampling, points frequently fall off the manifold, and only careful tuning of the guiding strength minimizes this issue. In contrast, SDE guiding is more robust, as noise helps correct guidance errors. Overall, after minimal optimization of $\guidingStrength_t$ and $h_t$, both Path Guidance and Loss Guidance perform well on this toy example. However, in both methods, careful calibration of the guiding strength at low $t$ is essential, as errors at this stage cannot be corrected later. Hence, we found the sigmoid function to be effective in these scenarios, as it naturally converges to 0 for $t \rightarrow 1$. In contrast to the Chingolin experiment, we find that loss guidance performs equally well in this synthetic setting, likely due to the simplicity of the data distribution, where the (KDE) in data space sufficiently captures the underlying probability distribution. We also investigate the effect of MEW regularization and observe that omitting the regularization reduces the diversity of the generated samples. Without MEW, the samples tend to be overly guided towards the guiding points on most probable regions, failing to capture the full variance of the underlying distribution \cref{fig:path_guidance_toy_MEW}.

\subsection{Ablation Studies on Loss guidance}
\label{sec:path_guidance:ablation}

Since reliable sampling with loss guidance could not be achieved, we conducted a more thorough investigation to enable a fair comparison. Instead of relying on Bayesian optimization, we performed an extensive grid search over the guiding parameters (see \cref{sec:appendix:path-guidance:chignolin_system} for details), with particular focus on smaller guiding strengths to mitigate the effects of unstable or misaligned gradients. Compared to path guidance, the grid search results show substantially lower guiding success, with a maximum transition-state sampling rate of only 0.15\%. While this does represent an improvement over unguided sampling (1\%), most configurations with non-negligible guidance success resulted in degenerate samples (\cref{fig:appendix:chingolin-path-guidance}B). Our analysis suggests that while loss guidance can partially align the model with the target angle distribution, it struggles to follow the desired sampling trajectory throughout the generative process. As a result, strong corrections near the data distribution are required, increasing the risk of sample degeneration (\cref{fig:appendix:chingolin-path-guidance}A).
\begin{figure}[htb]
    \centering
    \includegraphics[width=\linewidth]{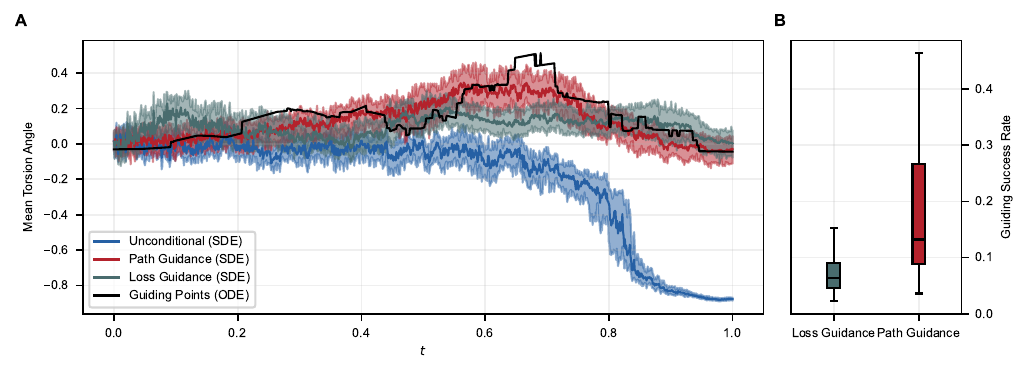}
    \caption{\textbf{Path Guidance vs. Loss-Guidance for sampling Transition States}. (A) Evolution of the mean torsion angle (which determines the state of the protein) during the diffusion process. (B) Guiding success rates across different parameter settings.}
    \label{fig:appendix:chingolin-path-guidance}
\end{figure}

\subsection{Baseline Experiments}
\label{app:baseline_experiments}
In this section, we describe the other two baselines, mentioned in \cref{sec:experiments:path_guidance}, which do not augment the vector field.
Instead, they utilize the latent representations of the guiding points $\mathcal{X}_t^g$ to initialize the sampling process for generating new points with similar latent characteristics. While these methods are appealing in their simplicity, they lack direct control over the sampling process itself. 

\textbf{Latent-KDE (L-KDE).} We can fit a KDE in the latent space on $\mathcal{X}_1^g$, sample from it, and integrate the probability flow ODE backwards in time. Fitting the KDE at the prior can be advantageous because the Euclidean distance, on which most kernels are based, is better suited for Gaussian-distributed data compared to its use in data space. We refer to this method as Latent-KDE (L-KDE). 

\textbf{Stochastic-Reverse (SR).} Alternatively, we can select a specific time step $t$ such that the desired properties are preserved and initialize the backward SDE (\cref{eq:background:reverse_diffusion}) with latents from $\mathcal{X}^g_t$. The stochasticity of the SDE will ensure we generate new divers samples with $\x^{\prime} \in A$.

We conduct sampling experiments using the aforementioned baseline methods to verify whether the results align with our intuition. Specifically, for the L-KDE baseline, we evaluate a Gaussian kernel with noise levels (standard deviations) of $\lbrace 0.01,0.05,0.1 \rbrace$. For the SR baseline, we consider intermediate times $\lbrace 0.1,0.5,0.9 \rbrace$. For simplicity, we only examine the scenario where there is a single guiding point (i.e., $\mathcal{X}_1^g$ and $\mathcal{X}_t^g$ are singleton sets). Each experiment is repeated with five different seeds.

In the following figures (Figures \cref{app:fig:L_KDE} -- \cref{app:fig:wasserstein}), we provide various metrics, histograms, and energy surfaces that summarize the trends observed in these baseline guidance scenarios. Overall, the results strongly suggest that guidance biases the sampling procedure toward the reference guiding points, which aligns with our intuition.

\begin{figure}[htb]
  \centering
  \begin{minipage}[t]{0.48\linewidth}
    \centering
    \includegraphics[width=\linewidth]{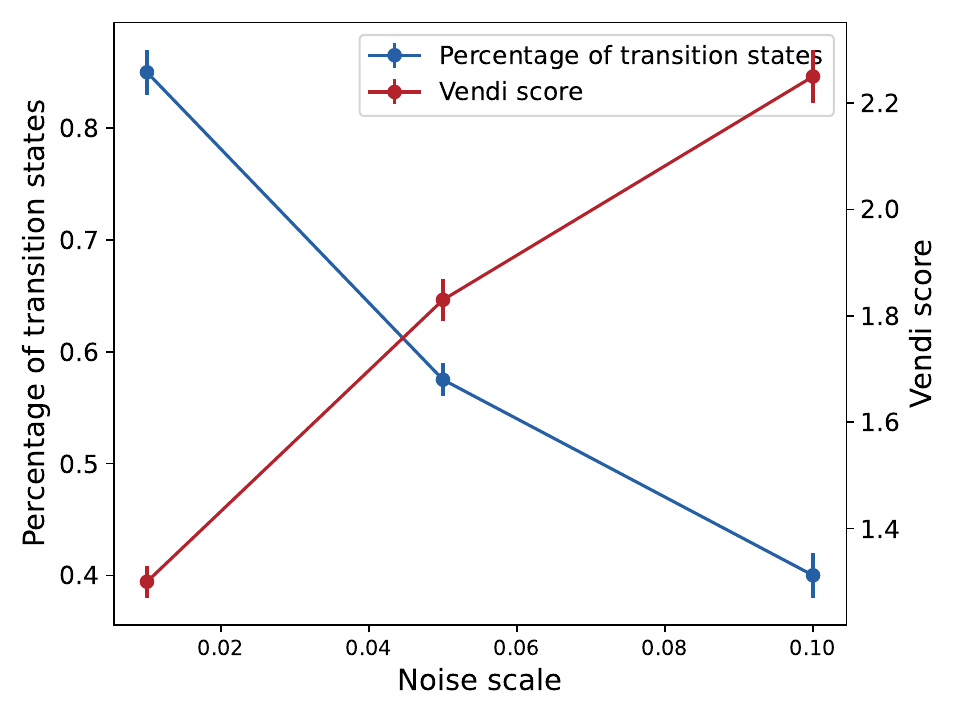}
    \caption{\textbf{Trade-off between sample variance and guidance success rate (L-KDE).} As the KDE noise scale increases for the L-KDE baseline, the percentage of transition states among the generated samples decreases (blue), while the vendi score among the generated states increases (red).}
    \label{app:fig:L_KDE}
  \end{minipage}\hfill
  \begin{minipage}[t]{0.48\linewidth}
    \centering
    \includegraphics[width=\linewidth]{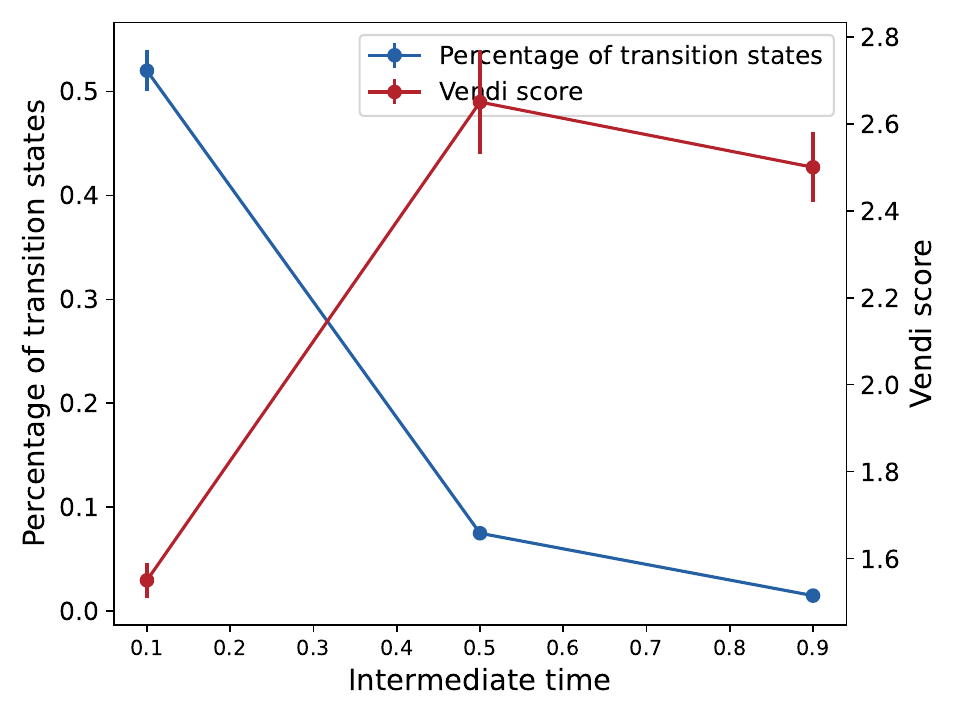}
    \caption{\textbf{Trade-off between sample variance and guidance success rate (SR).} As the selected time step $t$ increases for the SR baseline, the percentage of transition states among the generated samples decreases (blue), while the vendi score among the generated states tends to increase (red).}
    \label{app:fig:SR}
  \end{minipage}
\end{figure}

\begin{figure}
    \centering
    \includegraphics[width=\linewidth]{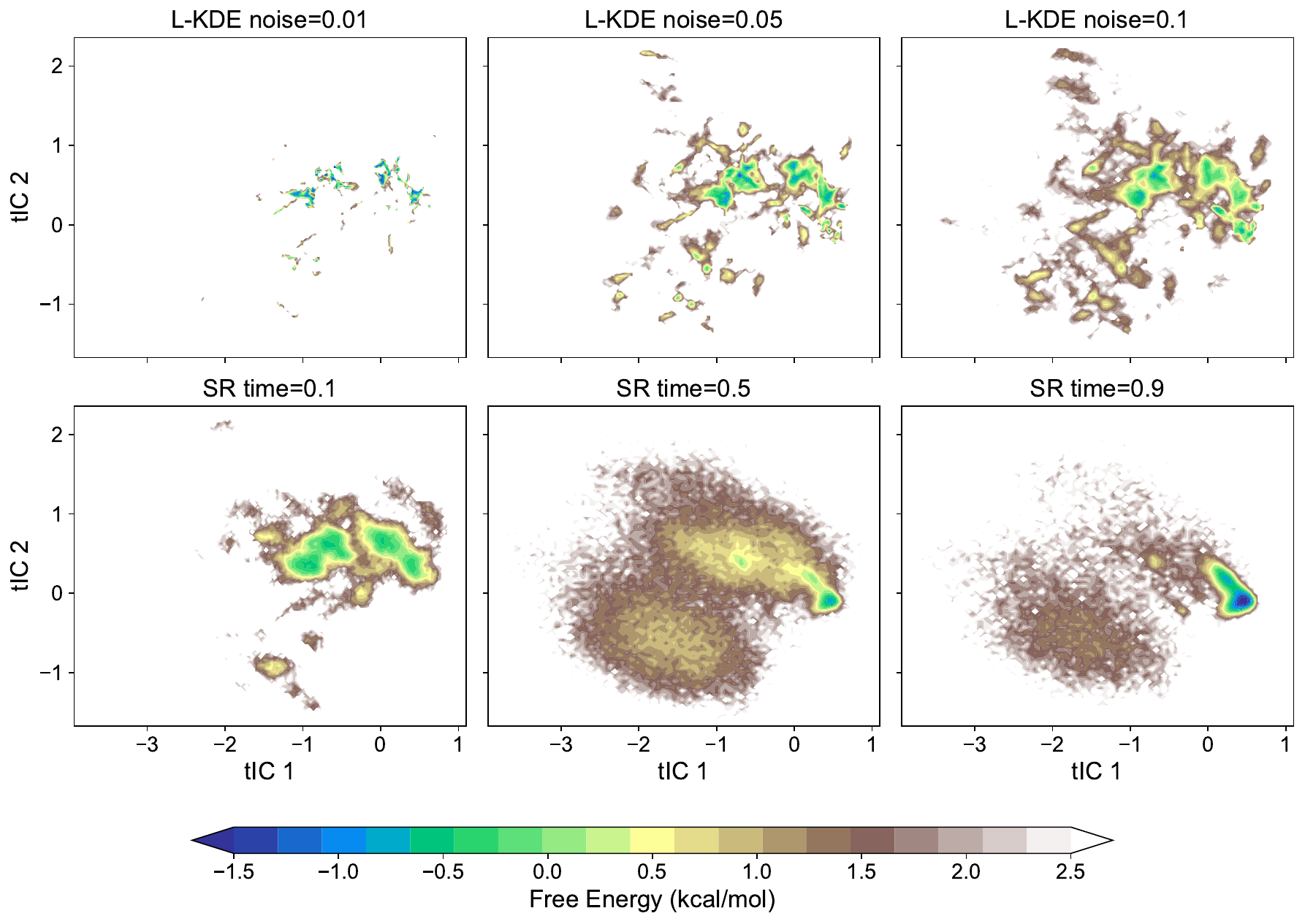}
    \caption{\textbf{Energy surface plots.} First row: L-KDE baseline for various levels of noise scale. The smaller the perturbation, the more concentrated the samples around the transition states region.
    Second row: SR baseline for various values of intermediate time. The smaller the stochasticity level, the more concentrated the samples around the transition states region. Compare with \cref{fig:voronoi_diagram} and \cref{fig:experiments:chingolin-guidance} A and B.}
    \label{app:fig:fes}
\end{figure}

\begin{figure}[htb]
    \centering
    \subfigure[L-KDE noise=0.01]{
    \includegraphics[width=0.7\linewidth]{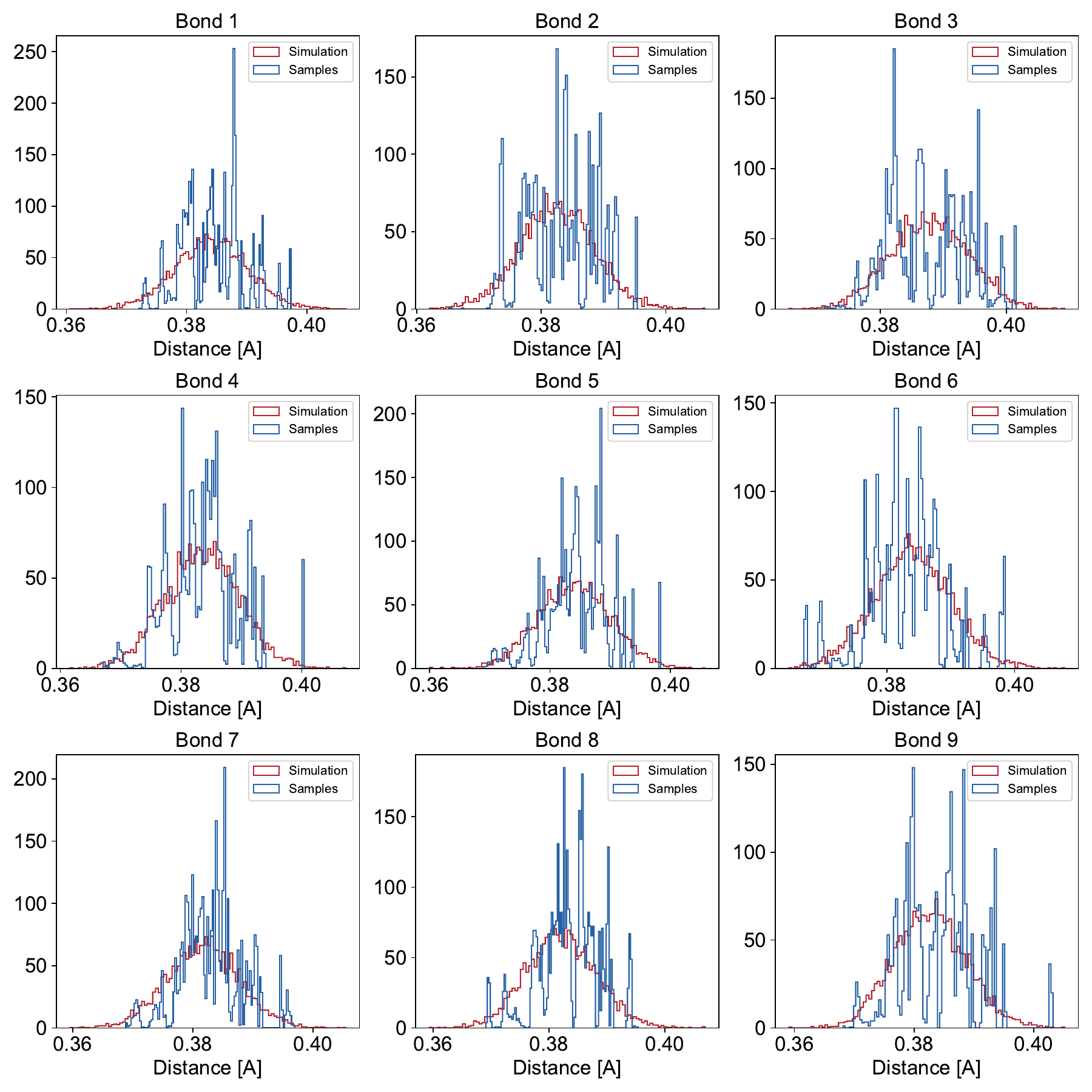}
    }
    \subfigure[L-KDE noise=0.1]{
    \includegraphics[width=0.7\linewidth]{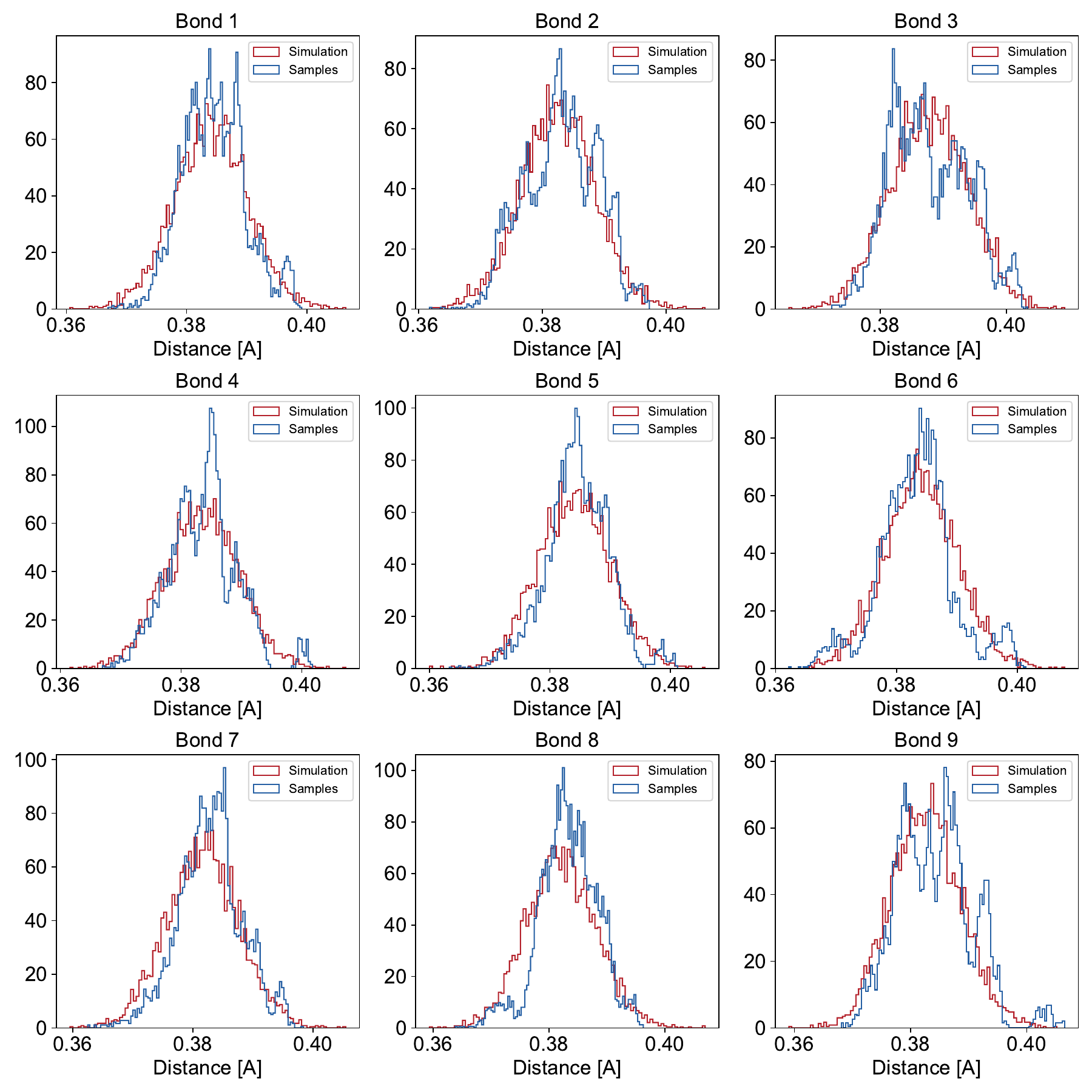}
    }
    \caption{\textbf{Comparison of bond distance distributions for L-KDE and the reference.} The L-KDE baseline (blue) is superposed on the corresponding histogram of the unconditional (red) distribution (the CLN025 MD simulation). We see that for small perturbations, the generated samples seem to conform to particular details of the guiding samples. As the noise increases, the guidance impact diminishes. This is quantified in a more principled way in \cref{app:fig:wasserstein}.}
    \label{app:fig:bond_distance}
\end{figure}

\begin{figure}[htb]
  \centering
  \begin{minipage}[t]{0.48\linewidth}
    \centering
    \subfigure[SR intermediate time=0.1]{%
      \includegraphics[width=\linewidth]{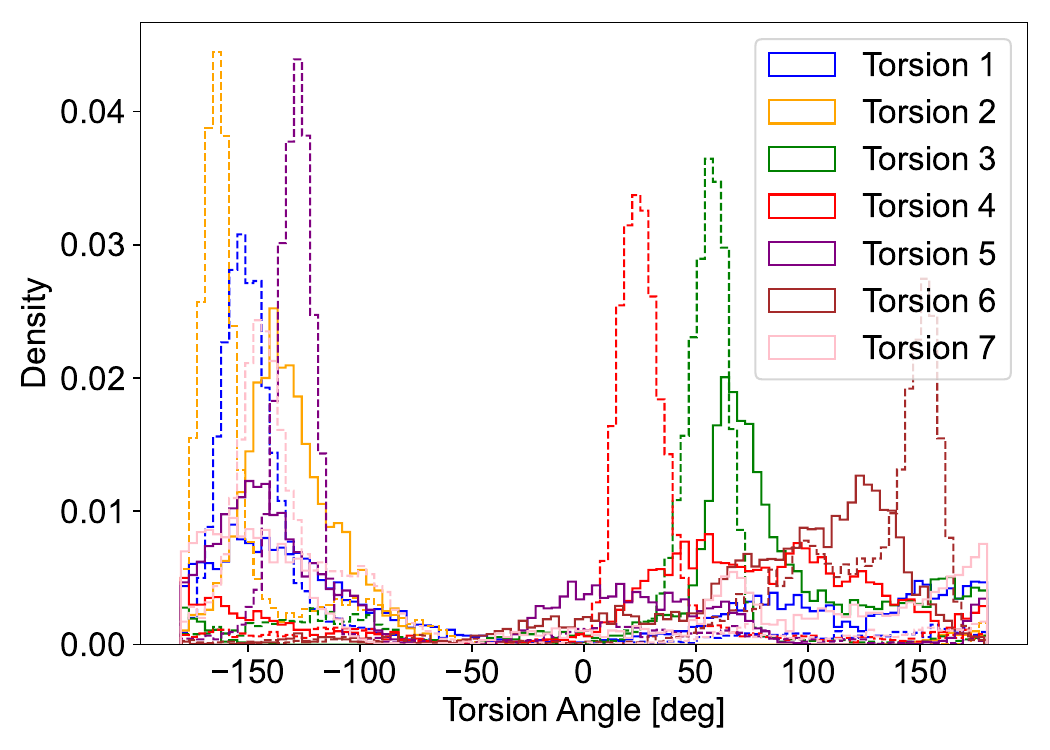}%
    }
  \end{minipage}\hfill
  \begin{minipage}[t]{0.48\linewidth}
    \centering
    \subfigure[SR intermediate time=0.9]{%
      \includegraphics[width=\linewidth]{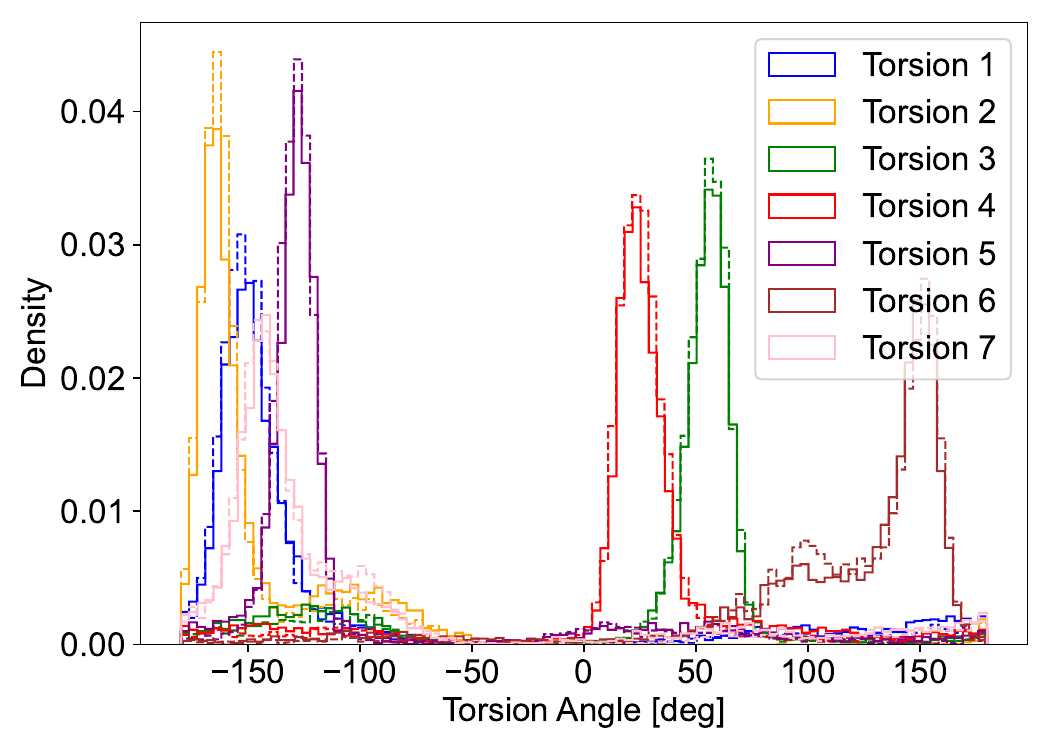}%
    }
  \end{minipage}
  \caption{\textbf{Torsion angle histograms for the SR baseline at different noise levels.} Solid lines show SR samples at $t=0.1$ (Left) and $t=0.9$ (Right), superimposed on the corresponding unconditional distributions (dashed lines). At high stochasticity ($t=0.9$), the torsion angle distribution becomes nearly indistinguishable from the unconditional one.}
  \label{app:fig:torsion_angle}
\end{figure}

\begin{figure}[htb]
  \centering
  \begin{minipage}[t]{0.48\linewidth}
    \centering
    \subfigure[Wasserstein distance vs noise scale for L-KDE baseline]{%
      \includegraphics[width=\linewidth]{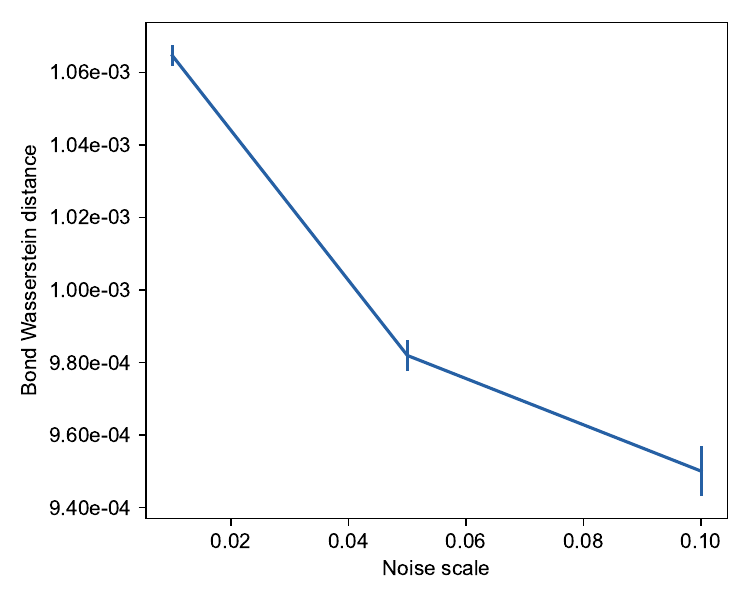}%
    }
  \end{minipage}\hfill
  \begin{minipage}[t]{0.48\linewidth}
    \centering
    \subfigure[Wasserstein distance vs intermediate time for SR baseline]{%
      \includegraphics[width=\linewidth]{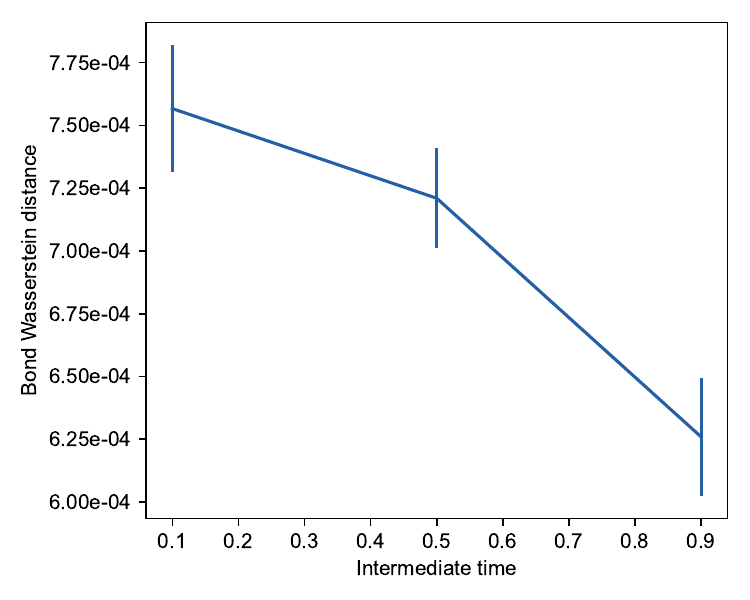}%
    }
  \end{minipage}
    \caption{\textbf{Wasserstein distance between baselines and reference bond distance distributions.} We measure the distance between the bond distance distributions of baseline methods and the CLN025 MD simulation (see \cref{app:fig:bond_distance}). As the stochasticity level increases for both baselines, the generated distributions converge toward the unconditional reference, indicating a reduced influence of the guidance signal.
    }
  \label{app:fig:wasserstein}
\end{figure}

\end{document}